\algrenewcommand\algorithmicrequire{\textbf{Inputs:}}
\algrenewcommand\algorithmicensure{\textbf{Outputs:}}
\def\eqref#1{equation~\ref{#1}}
\def\1{\bm{1}}
\def\vg{{\bm{g}}}
\def\vm{{\bm{m}}}
\def\vu{{\bm{u}}}
\def\vv{{\bm{v}}}
\def\vw{{\bm{w}}}
\def\vx{{\bm{x}}}
\def\mM{{\bm{M}}}
\def\mX{{\bm{X}}}
\DeclareMathAlphabet{\mathsfit}{\encodingdefault}{\sfdefault}{m}{sl}
\SetMathAlphabet{\mathsfit}{bold}{\encodingdefault}{\sfdefault}{bx}{n}
\def\gB{{\mathcal{B}}}
\def\gD{{\mathcal{D}}}
\def\gE{{\mathcal{E}}}
\def\gG{{\mathcal{G}}}
\def\gL{{\mathcal{L}}}
\def\gN{{\mathcal{N}}}
\def\gS{{\mathcal{S}}}
\def\gT{{\mathcal{T}}}
\def\gV{{\mathcal{V}}}
\def\gX{{\mathcal{X}}}
\def\sR{{\mathbb{R}}}
\newcommand{\E}{\mathbb{E}}
\definecolor{mydarkblue}{rgb}{0,0.08,0.45}
\theoremstyle{plain}
\theoremstyle{definition}
\theoremstyle{remark}
\newcommand{\cmark}{\ding{51}}%
\newcommand{\xmark}{\ding{55}}%
\title{Joint Bayesian Inference of Graphical Structure and Parameters with a Single Generative Flow Network}
\def\thanks#1{\protected@xdef\@thanks{\@thanks
        \protect\footnotetext{#1}}}
\author{
    Tristan~Deleu{\normalfont\textsuperscript{1}}\thanks{\textsuperscript{1}Universit\'{e} de Montr\'{e}al, \textsuperscript{2}McGill University, \textsuperscript{3}HEC Montr\'{e}al, \textsuperscript{4}CIFAR Senior Fellow.}\thanks{\textsuperscript{\phantom{1}}Correspondence: Tristan Deleu (\href{mailto:deleutri@mila.quebec}{deleutri@mila.quebec})\\[0.5ex]\hspace*{19.5pt}Code available at: \href{https://github.com/tristandeleu/jax-jsp-gfn}{https://github.com/tristandeleu/jax-jsp-gfn}}
    \quad Mizu~Nishikawa-Toomey{\normalfont\textsuperscript{1}} \quad Jithendaraa~Subramanian{\normalfont\textsuperscript{2}}\\[0.5ex]
    \textbf{Nikolay~Malkin{\normalfont\textsuperscript{1}}\quad Laurent~Charlin{\normalfont\textsuperscript{3}}\quad Yoshua~Bengio{\normalfont\textsuperscript{1,4}}}\\[1ex]
    Mila -- Quebec AI Institute
}
\begin{document}

\maketitle

\begin{abstract}
    Generative Flow Networks (GFlowNets), a class of generative models over discrete and structured sample spaces, have been previously applied to the problem of inferring the marginal posterior distribution over the directed acyclic graph (DAG) of a Bayesian Network, given a dataset of observations. Based on recent advances extending this framework to non-discrete sample spaces, we propose in this paper to approximate the joint posterior over not only the structure of a Bayesian Network, but also the parameters of its conditional probability distributions. We use a single GFlowNet whose sampling policy follows a two-phase process: the DAG is first generated sequentially one edge at a time, and then the corresponding parameters are picked once the full structure is known. Since the parameters are included in the posterior distribution, this leaves more flexibility for the local probability models of the Bayesian Network, making our approach applicable even to non-linear models parametrized by neural networks. We show that our method, called JSP-GFN, offers an accurate approximation of the joint posterior, while comparing favorably against existing methods on both simulated and real data.
\end{abstract}

\section{Introduction}
\label{sec:introduction}
As a compact representation for complex probabilistic models, Bayesian Networks are a framework of choice in many fields, such as computational biology \citep{friedman2000expressiondata,sachs2005causal} and medical diagnosis \citep{lauritzen1988munin}. When the directed acyclic graph (DAG) structure of the Bayesian Network---which specifies the possible conditional dependences among the observed variables---is known, it can be used to perform probabilistic inference for queries of interest with a variety of exact or approximate methods \citep{koller2009pgm}. However, if this graphical structure is unknown, one may want to infer it based on a dataset of observations $\gD$.

In addition to being a challenging problem due to the super-exponentially large search space, learning a single DAG structure from data may also lead to confident but incorrect predictions \citep{madigan1994enhancing}, especially in cases where the evidence is limited. In order to avoid misspecifying the model, it is therefore essential to quantify the epistemic uncertainty about the structure of the Bayesian Network. This can be addressed by taking a Bayesian perspective on structure learning and inferring the posterior distribution $P(G\mid \gD)$ over graphs given our observations. This (marginal) posterior can be approximated using methods based on Markov chain Monte Carlo (MCMC; \citealp{madigan1995structuremcmc}) or variational inference \citep{cundy2021bcdnets,lorch2021dibs}. However, all of these methods rely on the computation of the marginal likelihood $P(\gD\mid G)$, which can only be done efficiently in closed form for limited classes of models, such as linear Gaussian \citep{geiger1994bge}, discrete models with Dirichlet prior \citep{heckerman1995bde}, or non-linear models parametrized with a Gaussian Process \citep{vonkugelgen2019gpstructure}.

While there exists a vast literature on Bayesian structure learning to approximate the marginal posterior distribution, inferring the \emph{joint posterior} $P(G, \theta\mid \gD)$ over both the DAG structure $G$ of the Bayesian Network and the parameters $\theta$ of its conditional probability distributions---the probability of each variable given its parents---has received comparatively little attention. The main difficulty arises from the mixed sample space of the joint posterior distribution, with both discrete components (the graph $G$) and continuous components (the parameters $\theta$), where the dimensionality of the latter may even depend on $G$. However, modeling the posterior distribution over $\theta$ has the notable advantage that the conditional probability distributions can be more flexible (e.g., parametrized by neural networks): in general, computing $P(\gD\mid G,\theta)$ is easier than computing the marginal $P(\gD\mid G)$, lifting the need to perform intractable marginalizations.

Since they provide a framework for generative modeling of discrete and composite objects, Generative Flow Networks (GFlowNets; \citealp{bengio2021gflownet,bengio2021gflownetfoundations}) proved to be an effective method for Bayesian structure learning. In \citet{deleu2022daggflownet}, the problem of generating a sample DAG from the marginal posterior $P(G\mid \gD)$ was treated as a sequential decision process, where edges are added one at a time, starting from the empty graph over $d$ variables, following a learned transition probability. \citet{nishikawa2022vbg} have also proposed to use a GFlowNet to infer the joint posterior $P(G, \theta\mid \gD)$; however, they used it in conjunction with Variational Bayes to update the distribution over $\theta$, getting around the difficulty of modeling a continuous distribution with a GFlowNet.

In this paper, we propose to infer the joint posterior over graphical structures $G$ and parameters of the conditional probability distributions $\theta$ of a Bayesian Network using \emph{a single} GFlowNet called JSP-GFN (for \emph{Joint Structure and Parameters GFlowNet}), leveraging recent advances extending GFlowNets to continuous sample spaces \citep{lahlou2023continuousgfn}, and expands the scope of applications for Bayesian structure learning with GFlowNets. The generation of a sample $(G, \theta)$ from the approximate posterior now follows a two-phase process, where the DAG $G$ is first constructed by inserting one edge at a time, and then the corresponding parameters $\theta$ are chosen once the structure is completely known. To enable efficient learning of the sampling distribution, we introduce new conditions closely related to the ones derived in \citet{deleu2022daggflownet}, based on the subtrajectory balance conditions \citep{malkin2022trajectorybalance}, and show that they guarantee that the GFlowNet does represent $P(G, \theta\mid \gD)$ once they are completely satisfied. We validate empirically that JSP-GFN provides an accurate approximation of the posterior when those conditions are approximately satisfied by a learned sampling model, and compares favorably against existing methods on simulated and real~data.

\section{Background}
\label{sec:background}
\paragraph{Notations.} Throughout this paper, we will work with directed graphs $G = (V, E)$, where $V$ is a set of nodes, and $E \subseteq V \times V$ is a set of (directed) edges. For a node $X \in V$, we denote by $\mathrm{Pa}_{G}(X)$ the set of parents of $X$ in $G$, and $\mathrm{Ch}_{G}(X)$ the set of its children. For two nodes $X, Y \in V$, $X \rightarrow Y$ represents a directed edge $(X, Y) \in E$ (denoted $X \rightarrow Y \in G$), and $X \rightsquigarrow Y$ represents a directed path from $X$ to $Y$, following the edges in $E$ (denoted $X \rightsquigarrow Y \in G$).

In the context of GFlowNets (see \cref{sec:gflownet}), an \emph{undirected path} in a directed graph \mbox{$\gG = (\gV, \gE)$} between two states $s_{0}, s_{n} \in \gV$ is a sequence of vertices $(s_{0}, s_{1}, \ldots, s_{n})$ where either $(s_{i}, s_{i+1}) \in \gE$ or $(s_{i+1}, s_{i}) \in \gE$ (i.e., following the edges of the graph, regardless of their orientations).

\subsection{Bayesian structure learning}
\label{sec:bayesian-structure-learning}
A Bayesian Network is a probabilistic model over $d$ random variables $\{X_{1}, \ldots, X_{d}\}$, whose joint distribution factorizes according to a directed acyclic graph (DAG) $G$ as
\begin{equation}
    P(X_{1}, \ldots, X_{d}\mid \theta, G) = \prod_{i=1}^{d}P\big(X_{i}\mid \mathrm{Pa}_{G}(X_{i}); \theta_{i}\big),
    \label{eq:bayesian-network}
\end{equation}
where $\theta = \{\theta_{1}, \ldots, \theta_{d}\}$ represents the parameters of the conditional probability distributions (CPDs) involved in this factorization. When the structure $G$ is known, a Bayesian Network offers a representation of the joint distribution that may be convenient for probabilistic inference \citep{koller2009pgm}, as well as learning its parameters $\theta$ using a dataset $\gD = \{\vx^{(1)}, \ldots, \vx^{(N)}\}$, where each $\vx^{(j)}$ represents an observation of the $d$ random variables $\{X_{1}, \ldots, X_{d}\}$.

However, when the structure of the Bayesian Network is unknown, we can also \emph{learn} the DAG $G$ from data \citep{spirtes2000pc,chickering2002ges}. Using a Bayesian perspective, we may want to either model the (marginal) posterior distribution $P(G\mid \gD)$ over only the DAG structures, or the joint posterior $P(G, \theta \mid \gD)$ over both the structure $G$, as well as the parameters of the CPDs $\theta$.

\subsection{Generative Flow Networks}
\label{sec:gflownet}
A \emph{Generative Flow Network} (GFlowNet; \citealp{bengio2021gflownet,bengio2021gflownetfoundations}) is a generative model over a structured sample space $\gX$. The structure of the GFlowNet is described by a DAG $\gG$ whose vertex set is the \emph{state space} $\gS$, where $\gX \subseteq \gS$. This should not be confused with the DAG in a Bayesian Network: in fact, each state in $\gG$ could itself represent the structure of a Bayesian Network \citep{deleu2022daggflownet}. A sample $x\in\gX$ is constructed sequentially by following the edges of $\gG$, starting at a special initial state $s_{0}$, until we reach the state $x$. We define a special terminal state $s_{f}$, a transition to which indicates the end of the sequential process. The states in $\gX$ are those for which there is a directed edge $x\rightarrow s_{f}$; they are called \emph{complete states}\footnote{\citet{bengio2021gflownetfoundations} also call these states \emph{terminating}; we follow the naming conventions of \citet{deleu2022daggflownet} here to avoid ambiguity, as it is closely related to our work.} 
and correspond to valid samples of the distribution induced by the GFlowNet. A path $s_{0} \rightsquigarrow s_{f}$ in $\gG$ is called a complete trajectory. An example of the structure of a GFlowNet is given in \cref{sec:gflownet-structure} and \cref{fig:gflownet-joint}.

\begin{figure*}[t]
    \centering
    \includegraphics[width=0.8\linewidth, trim={31pt 0 0 5pt}, clip]{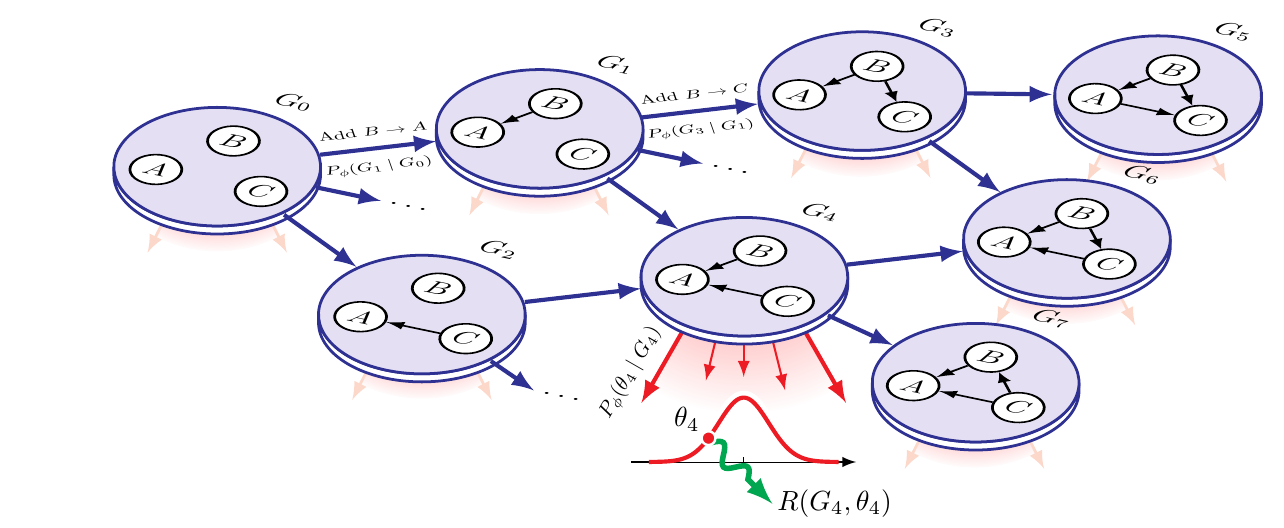}
    \caption{Structure of the Generative Flow Network to approximate the joint posterior distribution $P(G, \theta \mid \gD)$. A graph $G$ and parameters $\theta$ are constructed as follows: starting from the empty graph $G_{0}$, (1) the graph $G$ is first generated one edge at a time (blue), as in \citep{deleu2022daggflownet}. Then once we select the action indicating that we stop adding edges to the graph, (2) we generate the parameters $\theta$ (red), conditioned on the graph $G$. Finally, given $G$ and $\theta$, (3) we receive a reward $R(G, \theta)$ (green).}
    \label{fig:gflownet-joint}
\end{figure*}

Every complete state $x\in\gX$ is associated with a \emph{reward} $R(x) \geq 0$, indicating the unnormalized probability of $x$. We use the convention $R(s) = 0$ for any state $s\in \gS\backslash \gX$, since they do not correspond to valid samples of the distribution. \citet{bengio2021gflownet} showed that if there exists a function $F_{\phi}(s \rightarrow s') \geq 0$ defined over the edges of $\gG$, called a \emph{flow}, that satisfies the following \emph{flow-matching conditions}
\begin{equation}
    \sum_{\mathclap{s\in\mathrm{Pa}_{\gG}(s')}}\ F_{\phi}(s\rightarrow s') - \sum_{\mathclap{s''\in\mathrm{Ch}_{\gG}(s')}}\ F_{\phi}(s'\rightarrow s'') = R(s')
    \label{eq:flow-matching-condition}
\end{equation}
for all the non-terminal states $s' \in \gS$, then the GFlowNet induces a distribution over complete states proportional to the reward. More precisely, starting from the initial state $s_{0}$, if we sample a complete trajectory $(s_{0}, s_{1}, \ldots, s_{T-1}, x, s_{f})$ following the forward transition probability, defined as
\begin{equation}
    P(s_{t+1}\mid s_{t}) \propto F_{\phi}(s_{t}\rightarrow s_{t+1}),
    \label{eq:forward-transition-probability}
\end{equation}
with the conventions $s_{T} = x$ and $s_{T+1} = s_{f}$, then the marginal probability that a trajectory sampled following $P$ terminates in $x\in\gX$ is proportional to $R(x)$. The flow function $F_{\phi}(s \rightarrow s')$ may be parametrized by a neural network and optimized to minimize the error in (\ref{eq:flow-matching-condition}), yielding a transition model that can be used to approximately sample from the distribution on $\gX$ proportional to $R$.

\subsection{Structure learning with GFlowNets}
\label{sec:dag-gflownet}
Since GFlowNets are particularly well-suited to specifying distributions over composite objects, \citet{deleu2022daggflownet} used this framework in the context of Bayesian structure learning to approximate the (marginal) posterior distribution over DAGs $P(G\mid \gD)$. Their model, called \emph{DAG-GFlowNet}, operates on the state-space of DAGs, where each graph is constructed sequentially by adding one edge at a time, starting from the empty graph with $d$ nodes, while enforcing the acyclicity constraint at every step of the generation (i.e., an edge is not added if it would introduce a cycle). Its structure is illustrated at the top of \cref{fig:gflownet-joint}, where it forms the part of the graph shown in blue.

Instead of working with flows $F_{\phi}(G \rightarrow G')$, as in \cref{sec:gflownet}, DAG-GFlowNet directly learns the forward transition probability $P_{\phi}(G'\mid G)$ that satisfies the following alternative \emph{detailed balance} conditions for any transition $G \rightarrow G'$ (i.e., $G'$ is the result of adding a single edge to $G$):
\begin{equation}
    R(G')P_{B}(G \mid G')P_{\phi}(s_{f}\mid G) = R(G)P_{\phi}(G'\mid G)P_{\phi}(s_{f}\mid G'),
    \label{eq:dag-gfn-db-condition}
\end{equation}
where $P_{B}(G \mid G')$ is a fixed distribution over the parent states of $G'$ (e.g., uniform distribution over parents). \citet{deleu2022daggflownet} showed that since all the states are complete here, satisfying the conditions \cref{eq:dag-gfn-db-condition} for all $G \rightarrow G'$ still induces a distribution over DAGs $\propto R(G)$. Therefore, to approximate the posterior distribution $P(G \mid \gD)$, they used $R(G) = P(\gD\mid G)P(G)$ as the reward of $G$. In particular, this requires evaluating the marginal likelihood $P(\gD \mid G)$ efficiently, which is feasible only for limited classes of models (e.g., linear Gaussian; \citealp{geiger1994bge}).

\section{Joint Bayesian inference of structure and parameters}
\label{sec:joint-bayesian-inference}
Although Generative Flow Networks have been primarily applied to model distributions over discrete objects such as DAGs, \citet{lahlou2023continuousgfn} showed that similar ideas could also be applied to continuous objects, and discrete-continuous hybrids. Building on top of DAG-GFlowNet, we propose here to approximate the joint posterior $P(G, \theta \mid \gD)$ over both the structure of the Bayesian Network $G$, but also the parameters of its conditional probability distributions $\theta$. Unlike in VBG though \citep{nishikawa2022vbg}, we use a single GFlowNet to approximate this joint~posterior. We call this model \emph{JSP-GFN}, for Joint Structure and Parameters Bayesian inference with a GFlowNet.

\subsection{Structure of the GFlowNet}
\label{sec:gflownet-structure}
Unlike in DAG-GFlowNet, where we model a distribution only over DAGs, here we need to define a GFlowNet whose complete states are pairs $(G, \theta)$, where $G$ is a DAG and $\theta$ is a set of (continuous-valued) parameters whose dimension may depend on $G$. Complete states are obtained through two phases (\cref{fig:gflownet-joint}): the DAG $G$ is first constructed one edge at a time, following \citet{deleu2022daggflownet}, and then the corresponding parameters $\theta$ are generated, conditioned on $G$. We denote by $(G, \cdot)$ states where the DAG $G$ has no parameters $\theta$ associated to it (states in blue in \cref{fig:gflownet-joint}); they are intermediate states during the first phase of the GFlowNet, and do not correspond to valid samples of the induced distribution. Using the notations of \cref{sec:gflownet}, $(G, \theta) \in \gX$, whereas $(G, \cdot) \in \gS \backslash \gX$.

Starting at the empty graph $(G_{0}, \cdot)$, the DAG is constructed one edge at a time during the first phase, following the forward transition probabilities $P_{\phi}(G'\mid G)$. This first phase ends when a special ``stop'' action is selected with $P_{\phi}$, indicating that we stop adding edges to the graph; the role of this ``stop'' action is detailed in \cref{sec:forward-transition-probabilities}. Then during the second phase, we generate $\theta$ conditioned on $G$, following the forward transition probabilities $P_{\phi}(\theta \mid G)$.\footnote{Since the states of the GFlowNet here are pairs of objects, $P_{\phi}(\theta \mid G)$ (resp. $P_{\phi}(G'\mid G)$) is an abuse of notation, and represents $P_{\phi}((G, \theta) \mid (G, \cdot))$ (resp. $P_{\phi}((G', \cdot) \mid (G, \cdot))$).} All the complete states $(G, \theta)$, for a fixed graph $G$ and any set of parameters $\theta$, can be seen as forming an (infinitely wide) tree rooted at $(G, \cdot)$.

Since we want this GFlowNet to approximate the joint posterior $P(G, \theta \mid \gD) \propto P(\gD, \theta, G)$, it is natural to define the reward function of a complete state $(G, \theta)$ as
\begin{equation}
    R(G, \theta) = P(\gD\mid \theta, G)P(\theta\mid G)P(G),
    \label{eq:joint-reward}
\end{equation}
where the likelihood model $P(\gD\mid \theta, G)$ may be arbitrary (e.g., a neural network), and decomposes according to \cref{eq:bayesian-network}, $P(\theta\mid G)$ is the prior over parameters, and $P(G)$ the prior over graphs. When the dataset $\gD$ is large, we can use a mini-batch approximation to the reward (see \cref{sec:minibatch-training}).

\subsection{Subtrajectory Balance conditions}
\label{sec:sub-tb-conditions}
To obtain a generative process that samples pairs of $(G, \theta)$ proportionally to the reward, the GFlowNet needs to satisfy some conditions such as \cref{eq:dag-gfn-db-condition}. However, we saw in \cref{sec:dag-gflownet} that satisfying this particular formulation of the detailed balance conditions in \cref{eq:dag-gfn-db-condition} yields a distribution $\propto R(\cdot)$ only if all the states are complete; unfortunately, this is not the case here since there exists states of the form $(G, \cdot)$ corresponding to graphs without their associated parameters. Instead, we use a generalization of detailed balance to undirected paths of arbitrary length, called the \emph{Subtrajectory Balance} conditions (SubTB; \citealp{malkin2022trajectorybalance}); we give a brief overview of SubTB in \cref{app:subtrajectory-balance-conditions}.

More precisely, we consider SubTB for any undirected path of the form $(G, \theta) \leftarrow (G, \cdot) \rightarrow (G', \cdot) \rightarrow (G', \theta')$ in the GFlowNet (see \cref{fig:sub-trajectories}), where $G'$ is therefore the result of adding a single edge to $G$. Since both ends of these undirected paths of length 3 are complete states, we show in \cref{app:proof-subtb3} that the SubTB conditions corresponding to undirected paths of this form can be written as
\begin{equation}
    R(G', \theta')P_{B}(G\mid G')P_{\phi}(\theta\mid G) = R(G, \theta)P_{\phi}(G'\mid G)P_{\phi}(\theta'\mid G').
    \label{eq:sub-tb3-conditions}
\end{equation}
Note that the SubTB condition above is very similar to the detailed balance condition in \cref{eq:dag-gfn-db-condition} used in DAG-GFlowNet, where the probability of terminating $P_{\phi}(s_{f}\mid G)$ has been replaced by $P_{\phi}(\theta \mid G)$, in addition to the reward now depending on both $G$ and $\theta$. Moreover, while it does not seem to appear in \cref{eq:sub-tb3-conditions}, the terminal state $s_{f}$ of the GFlowNet is still present implicitly, since we are forced to terminate once we have reached a complete state $(G, \theta)$, and therefore $P_{\phi}(s_{f}\mid G, \theta) = 1$ (see App.~\ref{app:proof-subtb3}).

Although there is no guarantee in general that satisfying the SubTB conditions would yield a distribution proportional to the reward, unlike with the detailed balance conditions \citep{bengio2021gflownetfoundations}, the following theorem shows that the GFlowNet does induce a distribution $\propto R(G, \theta)$ if the SubTB conditions in \cref{eq:sub-tb3-conditions} are satisfied for all pairs $(G, \theta)$ and $(G', \theta')$.
\begin{restatable}{theorem}{subtbgfnsampler}
    If the SubTB conditions in \cref{eq:sub-tb3-conditions} are satisfied for all undirected paths of length 3 between any $(G, \theta)$ and $(G', \theta')$ of the form $(G, \theta) \leftarrow (G, \cdot) \rightarrow (G', \cdot) \rightarrow (G', \theta')$, then we have
    \begin{equation*}
        P_{\phi}^{\top}(G, \theta) \triangleq P_{\phi}(G\mid G_{0})P_{\phi}(\theta \mid G) \propto R(G, \theta),
    \end{equation*}
    where $P_{\phi}(G\mid G_{0})$ is the marginal probability of reaching $G$ from the initial state $G_{0}$ with any (complete) trajectory $\tau = (G_{0}, G_{1}, \ldots, G_{T-1}, G)$:
    \begin{equation*}
        P_{\phi}(G\mid G_{0}) \triangleq \sum_{\tau: G_{0} \rightsquigarrow G}\prod_{t=0}^{T-1}P_{\phi}(G_{t+1}\mid G_{t}),
    \end{equation*}
    using the conventions $G_{T} = G$, and $P_{\phi}(G_{0}\mid G_{0}) = 1$.
    \label{thm:sub-tb-gfn-sampler}
\end{restatable}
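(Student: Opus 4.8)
The plan is to reduce the statement to the discrete detailed-balance argument of \citet{deleu2022daggflownet} by showing that the SubTB conditions in \cref{eq:sub-tb3-conditions} force the quantity $R(G,\theta)/P_{\phi}(\theta\mid G)$ to be \emph{independent} of $\theta$, after which the continuous second phase decouples from the discrete first phase. Concretely, I would first define, for every complete state $(G,\theta)$ with $P_{\phi}(\theta\mid G)>0$, the ratio $F(G,\theta) \triangleq R(G,\theta)/P_{\phi}(\theta\mid G)$, and divide both sides of \cref{eq:sub-tb3-conditions} by $P_{\phi}(\theta\mid G)P_{\phi}(\theta'\mid G')$ to rewrite the SubTB condition in the detailed-balance form $F(G',\theta')\,P_{B}(G\mid G') = F(G,\theta)\,P_{\phi}(G'\mid G)$.

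The crux is the following observation. For a fixed transition $G\rightarrow G'$, the left-hand side of this rewritten identity depends only on $\theta'$ while the right-hand side depends only on $\theta$; since the condition holds simultaneously for all $\theta$ and all $\theta'$, both sides must in fact be constant in those variables. Hence, whenever $P_{\phi}(G'\mid G)>0$, the value $F(G,\theta)$ does not depend on $\theta$, and I may write $\tilde F(G)\triangleq F(G,\theta)$. A short case analysis over the graph lattice then covers every $G$: the empty graph $G_{0}$ admits at least one outgoing edge (for $d\ge 2$) and so is handled as the source of a transition, while every non-empty $G$ arises as the target $G'$ of some transition, where the symmetric argument (using $P_{B}(G\mid G')>0$ for the fixed backward kernel) shows $\tilde F(G')$ is well defined. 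This step requires the mild full-support assumption that the relevant forward probabilities are strictly positive and that $P_{\phi}(\cdot\mid G)$ is positive on the support of $R(G,\cdot)$.

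With $\tilde F$ in hand, the reduced identity becomes exactly the detailed-balance recurrence $\tilde F(G')P_{B}(G\mid G') = \tilde F(G)P_{\phi}(G'\mid G)$, and I would then prove $P_{\phi}(G\mid G_{0}) = \tilde F(G)/Z$ with $Z\triangleq \tilde F(G_{0})$ by induction on the number of edges of $G$. The base case is the convention $P_{\phi}(G_{0}\mid G_{0})=1=\tilde F(G_{0})/Z$. For the inductive step, I would decompose the reaching probability by its last transition, $P_{\phi}(G'\mid G_{0}) = \sum_{G}P_{\phi}(G\mid G_{0})P_{\phi}(G'\mid G)$, the sum ranging over the parent graphs $G$ of $G'$ in the state space (those obtained by deleting one edge of $G'$), substitute the inductive hypothesis and the detailed-balance relation to obtain $\sum_{G}\tilde F(G')P_{B}(G\mid G')/Z$, and conclude using $\sum_{G}P_{B}(G\mid G')=1$ that this equals $\tilde F(G')/Z$.

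Finally, I would combine the pieces: $P_{\phi}^{\top}(G,\theta) = P_{\phi}(G\mid G_{0})P_{\phi}(\theta\mid G) = \tilde F(G)P_{\phi}(\theta\mid G)/Z = R(G,\theta)/Z$, using $\tilde F(G)P_{\phi}(\theta\mid G)=F(G,\theta)P_{\phi}(\theta\mid G)=R(G,\theta)$, which is the claimed proportionality with normalizing constant $Z$. I expect the main obstacle to be the $\theta$-independence step: articulating cleanly that a single identity holding for all $(\theta,\theta')$ forces both sides to be constant, and discharging the support conditions needed to divide by $P_{\phi}(G'\mid G)$ and $P_{\phi}(\theta\mid G)$. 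Everything after that is the discrete argument of \citet{deleu2022daggflownet} essentially verbatim.
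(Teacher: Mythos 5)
Your proof is correct, but it takes a genuinely different route from the paper's argument in \cref{app:proof-marginal-distribution}. You promote the $\theta$-independence of $F(G,\theta) \triangleq R(G,\theta)/P_{\phi}(\theta\mid G)$ to the central lemma: the paper derives exactly this fact as \cref{eq:subtb3-to-subtb2} in \cref{app:proof-diffsubtb2}, but in its proof of \cref{thm:sub-tb-gfn-sampler} it invokes it only once, at the very end, to argue that the constant $P_{\phi}(\theta_{0}\mid G_{0})/R(G_{0},\theta_{0})$ at the initial state is well defined. Having extracted the state flow $\tilde F(G)$, you recover the discrete detailed-balance recurrence of \citet{deleu2022daggflownet} and finish by induction on the edge count, decomposing the reaching probability by its last transition and using $\sum_{G} P_{B}(G\mid G') = 1$. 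The paper instead fixes an arbitrary trajectory $G_{0} \rightsquigarrow G$, attaches arbitrary parameters $\theta_{t}$ to each intermediate graph, telescopes the product of ratios $P_{\phi}(G_{t+1}\mid G_{t})/P_{B}(G_{t}\mid G_{t+1})$ via the per-step SubTB identity, and then sums over all trajectories at once using the fact that $P_{B}$ induces a normalized distribution over trajectories $G_{0}\rightsquigarrow G$. The two mechanisms are essentially unrollings of one another ($\sum_{\tau} P_{B}(\tau) = 1$ versus its one-step version), but your factorization buys a cleaner modular structure---the continuous phase decouples, after which the argument is the discrete one verbatim---at the price of needing $\tilde F$ to be \emph{globally} well defined, which forces you to discharge positivity/support side conditions at every state (in particular at $G_{0}$ as the source of a transition, needing $d \geq 2$ and a nonzero continue probability), whereas the paper's trajectory-level argument needs $\theta$-independence only at the single state $G_{0}$. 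You flag these support conditions explicitly, and they are no stronger than what the paper itself implicitly assumes when it divides by $P_{B}(G_{t}\mid G_{t+1})$ and by $R(G_{t},\theta_{t})P_{\phi}(\theta_{t+1}\mid G_{t+1})$ in the telescoping step, so this is a presentational rather than a substantive gap.
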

The proof of this theorem is available in \cref{app:proof-marginal-distribution}. The marginal distribution $P_{\phi}^{\top}(G, \theta)$ is also called the \emph{terminating state probability} in \citet{bengio2021gflownetfoundations}.

\subsection{Learning objective}
\label{sec:learning-gflownet}
One way to find the parameters $\phi$ of the forward transition probabilities that enforce the SubTB conditions in \cref{eq:sub-tb3-conditions} for all $(G, \theta)$ and $(G', \theta')$ is to transform this condition into a learning objective. For example, we could minimize a non-linear least squares objective \citep{bengio2021gflownet,bengio2021gflownetfoundations} of the form $\gL(\phi) = \E_{\pi}[\Delta^{2}(\phi)]$, where the residuals $\Delta(\phi)$ depend on the conditions in \cref{eq:sub-tb3-conditions}, and $\pi$ is an arbitrary sampling distribution of complete states $(G, \theta)$ and $(G', \theta')$, with full support; see \citet{malkin2022gfnhvi} for a discussion of the effect of $\pi$ on training GFlowNets, and \cref{app:sampling-distribution} for further~details in the context of JSP-GFN.

In addition to the SubTB conditions for undirected paths of length 3 given in \cref{sec:sub-tb-conditions}, we can also derive similar SubTB conditions for other undirected paths, for example those of the form $(G, \theta) \leftarrow (G, \cdot) \rightarrow (G, \tilde{\theta})$, where $\theta$ and $\tilde{\theta}$ are two possible sets of parameters associated with the same graph $G$ (see also \cref{fig:sub-trajectories}). When the reward function $R(G, \theta)$ is differentiable wrt. $\theta$, which is the case here, we show in \cref{app:proof-diffsubtb2} that we can write the SubTB conditions for these undirected paths of length 2 in differential form as
\begin{equation}
    \nabla_{\theta} \log P_{\phi}(\theta\mid G) = \nabla_{\theta} \log R(G, \theta).
    \label{eq:differential-sub-tb2-condition}
\end{equation}
This condition is equivalent to the notion of score matching \citep{hyvarinen2005scorematching} to model unnormalized distributions, since $R(G, \theta)$ here corresponds to the unnormalized posterior distribution $P(\theta\mid G, \gD)$, where the normalization is over $\theta$. We also show in \cref{app:proof-diffsubtb2} that incorporating this information about undirected paths of length 2, via the identity in \cref{eq:differential-sub-tb2-condition}, amounts to \emph{preventing} backpropagation through $\theta$ and $\theta'$ (e.g., backpropagation with the reparametrization trick) in the objective
\begin{equation}
    \gL(\phi) = \E_{\pi}\!\left[\left(\log \frac{R\big(G', \bot(\theta')\big)P_{B}(G\mid G')P_{\phi}\big(\bot(\theta)\mid G\big)}{R\big(G, \bot(\theta)\big)P_{\phi}(G'\mid G)P_{\phi}\big(\bot(\theta')\mid G'\big)}\right)^{2}\right],
    \label{eq:sub-tb-loss-error}
\end{equation}
where $\bot$ denotes the ``stop-gradient'' operation. This is aligned with the recommendations of \citet{lahlou2023continuousgfn} to avoid backpropagation through the reward in continuous GFlowNets. The pseudo-code for training JSP-GFN is available in \cref{alg:gflownet-training}.

\subsection{Parametrization of the forward transition probabilities}
\label{sec:forward-transition-probabilities}
In \cref{sec:gflownet-structure}, we saw that the process of generating $(G, \theta)$ follows two phases: first we construct $G$ one edge at a time, until we sample a specific ``stop'' action, at which point we sample $\theta$, conditioned on $G$. All these actions are sampled using the forward transition probabilities $P_{\phi}(G'\mid G)$ during the first phase, and $P_{\phi}(\theta \mid G)$ during the second one. Following \citet{deleu2022daggflownet}, we parametrize these forward transition probabilities using a hierarchical model: we first decide whether we want to stop the first phase or not, with probability $P_{\phi}(\mathrm{stop}\mid G)$; then, conditioned on this first decision, we either continue adding an edge to $G$ to reach $G'$ with probability $P_{\phi}(G'\mid G, \neg \mathrm{stop})$ (phase 1), or sample $\theta$ with probability $P_{\phi}(\theta\mid G, \mathrm{stop})$ (phase 2). This hierarchical model can be written as
\begin{align}
    P_{\phi}(G'\mid G) &= \big(1 - P_{\phi}(\mathrm{stop}\mid G)\big)P_{\phi}(G'\mid G, \neg \mathrm{stop})\\
    P_{\phi}(\theta \mid G) &= P_{\phi}(\mathrm{stop}\mid G)P_{\phi}(\theta\mid G, \mathrm{stop}).
\end{align}
We use neural networks to parametrize each of the three components necessary to define the forward transition probabilities. Unlike in DAG-GFlowNet though, which uses a linear Transformer to define $P_{\phi}(\mathrm{stop}\mid G)$ and $P_{\phi}(G'\mid G, \neg \mathrm{stop})$, we use a combination of graph network \citep{battaglia2018graphnetwork} and self-attention blocks \citep{vaswani2017transformers} to encode information about the graph $G$, which appears in the conditioning of all the quantities of interest. This common backbone returns a graph embedding $\vg$ of $G$, as well as 3 embeddings $\vu_{i}, \vv_{i}, \vw_{i}$ for each node $X_{i}$ in $G$
\begin{equation*}
    \vg, \{\vu_{i}, \vv_{i}, \vw_{i}\}_{i=1}^{d} = \mathrm{SelfAttention}_{\phi}\big(\mathrm{GraphNet}_{\phi}(G)\big).
\end{equation*}
We can parametrize the probability of selecting the ``stop'' action using $\vg$ with $P_{\phi}(\mathrm{stop}\mid G) = f_{\phi}(\vg)$, where $f_{\phi}$ is a neural network with a sigmoid output; note that if we can't add any edge to $G$ without creating a cycle, we force the end of the first phase by setting $P_{\phi}(\mathrm{stop}\mid G) = 1$. Inspired by \citet{lorch2021dibs}, the probability of moving from $G$ to $G'$ by adding the edge $X_{i} \rightarrow X_{j}$ is parametrized~by
\begin{equation}
    P_{\phi}(G'\mid G, \neg \mathrm{stop}) \propto \vm_{ij} \exp\!\big(\vu_{i}^{\top}\vv_{j}\big),
    \label{eq:forward-transition-proba-continue}
\end{equation}
where $\vm_{ij}$ is a binary mask indicating whether adding $X_{i} \rightarrow X_{j}$ is a valid action (i.e., if it is not already present in $G$, and if it doesn't introduce a cycle; \citealp{deleu2022daggflownet}). Finally, the probability of selecting the parameters $\theta_{i}$ of the CPD for the variable $X_{i}$ is parametrized with a multivariate Normal distribution with diagonal covariance (unless specified otherwise)
\begin{equation}
    P_{\phi}(\theta_{i}\mid G, \mathrm{stop}) = \gN\big(\theta_{i}\mid \bm{\mu}_{\phi}(\vw_{i}), \bm{\sigma}_{\phi}^{2}(\vw_{i})\big),
    \label{eq:forward-transition-proba-stop}
\end{equation}
where $\bm{\mu}_{\phi}$ and $\bm{\sigma}_{\phi}^{2}$ are two neural networks, with appropriate non-linearities to guarantee that $\bm{\sigma}_{\phi}^{2}(\vw_{i})$ is a well-defined diagonal covariance matrix. Note that $P_{\phi}(\theta_{i}\mid G, \mathrm{stop})$ effectively approximates the posterior distribution $P(\theta_{i}\mid G, \gD)$ once fully trained. Moreover, in addition to being an approximation of the joint posterior $P(G, \theta\mid \gD)$, the GFlowNet also provides an approximation of the marginal posterior $P(G\mid \gD)$, by only following the first phase of the generation process (to generate $G$) until the ``stop'' action is selected, and not continuing into the generation of $\theta$.

\subsection{Mini-batch training}
\label{sec:minibatch-training}
Throughout the paper, we have assumed that we had access to the full dataset of observations $\gD$ in order to compute the reward $R(G, \theta)$ in \cref{eq:joint-reward}. However, beyond the capacity to have an arbitrary likelihood model $P(\gD\mid \theta, G)$ (e.g., non-linear), another advantage of approximating the joint posterior $P(G, \theta\mid \gD)$ is that we can train the GFlowNet using mini-batches of observations. Concretely, for a mini-batch $\gB$ of $M$ observations sampled uniformly at random from the dataset $\gD$, we can define
\begin{equation}
    \log \widehat{R}_{\gB}(G, \theta) = \log P(\theta \mid G) + \log P(G) + \frac{N}{M}\ \sum_{\mathclap{\vx^{(m)}\in\gB}}\ \log P(\vx^{(m)}\mid G, \theta),
    \label{eq:score-estimate}
\end{equation}
which is an unbiased estimate of the log-reward. The following proposition shows that minimizing the estimated loss based on \cref{eq:score-estimate} wrt.\ the parameters $\phi$ of the GFlowNet also minimizes the original objective in \cref{sec:learning-gflownet}.
\begin{restatable}{proposition}{minibatchtraining}
Suppose that $\gB$ is a mini-batch of $M$ observations sampled uniformly at random from the dataset $\gD$, and let $\widehat{\gL}_{\gB}(\phi)$ be the learning objective defined in \cref{sec:learning-gflownet}, where the reward has been replaced by the estimate $\widehat{R}_{B}(G, \theta)$ in \cref{eq:score-estimate}. Then we have $\gL(\phi) \leq \E_{\gB}\big[\widehat{\gL}_{\gB}(\phi)\big]$.
\label{prop:minibatch-training}
\end{restatable}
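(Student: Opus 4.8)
The plan is to isolate the only place where the mini-batch enters the loss---namely the two log-reward terms---and to exploit that the log-reward estimate in \cref{eq:score-estimate} is \emph{unbiased}, so that the gap between $\gL(\phi)$ and $\E_{\gB}[\widehat{\gL}_{\gB}(\phi)]$ reduces to a single application of Jensen's inequality on the outer square.

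First I would expand the residual in \cref{eq:sub-tb-loss-error} by distributing the logarithm, writing it as
\[
    \Delta(\phi) = \big[\log R(G', \bot(\theta')) - \log R(G, \bot(\theta))\big] + C_{\phi},
\]
where $C_{\phi} = \log P_{B}(G\mid G') + \log P_{\phi}(\bot(\theta)\mid G) - \log P_{\phi}(G'\mid G) - \log P_{\phi}(\bot(\theta')\mid G')$ gathers all the policy terms and does \emph{not} depend on the mini-batch $\gB$. The mini-batch residual $\widehat{\Delta}_{\gB}(\phi)$ is obtained by replacing each $\log R$ by $\log \widehat{R}_{\gB}$ from \cref{eq:score-estimate}, and hence differs from $\Delta(\phi)$ only in the bracketed log-reward difference. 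Crucially, since the prior terms in \cref{eq:score-estimate} are deterministic and $\frac{N}{M}\sum_{\vx^{(m)}\in\gB}\log P(\vx^{(m)}\mid G, \theta)$ is an unbiased estimator of $\sum_{j=1}^{N}\log P(\vx^{(j)}\mid G, \theta) = \log P(\gD\mid \theta, G)$, we get $\E_{\gB}[\log \widehat{R}_{\gB}(G, \theta)] = \log R(G, \theta)$ for every fixed $(G, \theta)$, and therefore $\E_{\gB}[\widehat{\Delta}_{\gB}(\phi)] = \Delta(\phi)$.

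Next I would use Fubini to swap the (independent) expectations over $\pi$ and $\gB$, writing $\E_{\gB}[\widehat{\gL}_{\gB}(\phi)] = \E_{\pi}\E_{\gB}[\widehat{\Delta}_{\gB}^{2}(\phi)]$. Conditioning on the $\pi$-sample fixes the tuple $(G, \theta, G', \theta')$, so $C_{\phi}$ becomes a constant and the only remaining randomness is in $\gB$. The elementary inequality $\E[Y^{2}] = \Var(Y) + (\E[Y])^{2} \geq (\E[Y])^{2}$ applied to $Y = \widehat{\Delta}_{\gB}(\phi)$ then gives $\E_{\gB}[\widehat{\Delta}_{\gB}^{2}(\phi)] \geq (\E_{\gB}[\widehat{\Delta}_{\gB}(\phi)])^{2} = \Delta^{2}(\phi)$ pointwise. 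Taking $\E_{\pi}$ on both sides yields $\E_{\gB}[\widehat{\gL}_{\gB}(\phi)] \geq \E_{\pi}[\Delta^{2}(\phi)] = \gL(\phi)$, which is the claim.

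The main obstacle is conceptual rather than technical: one has to notice that it is the \emph{log}-reward, and not the reward itself, that is unbiased under mini-batching, and that it enters linearly inside the square---this is exactly what makes the variance term non-negative and points the inequality in the stated direction. Two minor points to address are that the stop-gradient $\bot$ is harmless here, since $\phi$ is held fixed while we average over $\gB$ and $\bot(\theta)$ is simply a numerical value; and that using a single mini-batch for both the numerator and denominator rewards only introduces correlation, which affects $\Var(\widehat{\Delta}_{\gB})$ but is irrelevant to the expectation-based argument above.
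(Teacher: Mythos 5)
Your proof is correct and follows essentially the same route as the paper's: you establish that $\log \widehat{R}_{\gB}(G,\theta)$ is an unbiased estimator of the log-reward (so $\E_{\gB}[\widehat{\Delta}_{\gB}(\phi)] = \Delta(\phi)$), and then conclude via convexity of the square, since your inequality $\E[Y^{2}] \geq (\E[Y])^{2}$ is exactly Jensen's inequality as invoked in the paper. Your explicit isolation of the mini-batch-independent term $C_{\phi}$ and the remarks on the stop-gradient and on numerator--denominator correlation are harmless elaborations that do not change the argument.
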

The proof is available in \cref{app:minibatch-training}, only relies on the convexity of the square function to conclude, but no other property of this function; in practice, we use the Huber loss instead of the square loss for stability, which is also a convex function. In the case of the square loss, \cref{prop:minibatch-training-L2} gives a stronger result in terms of unbiasedness of the gradient estimator.

\section{Related work}
\label{sec:related-work}
\paragraph{Bayesian Structure Learning.} There is a vast literature applying Markov chain Monte Carlo (MCMC) methods to approximate the marginal posterior $P(G\mid \gD)$ over the graphical structures of Bayesian Networks \citep{madigan1995structuremcmc,friedman2003ordermcmc,giudici2003improvingmcmc,viinikka2020gadget}. However, since the parameter space in which $\theta$ lives depends on the graph structure $G$, approximating the joint posterior $P(G, \theta\mid \gD)$ using MCMC requires additional trans-dimensional updates \citep{fronk2002rjmcmcdags}, and has therefore received less attention than the~marginal~case.

Variational methods have been proposed to approximate the marginal posterior too \citep{annadani2021vcn,charpentier2022differentiabledag}. Similar to MCMC though, approximating the joint posterior has also been less studied than its marginal counterpart, with the notable exceptions of DiBS \citep{lorch2021dibs} and BCD Nets \citep{cundy2021bcdnets}. We provide an extensive qualitative comparison between our method JSP-GFN and prior variational inference and GFlowNet methods in \cref{app:positioning-literature}.

\paragraph{Generative Flow Networks.} While they were initially developed to encourage the discovery of diverse molecules \citep{bengio2021gflownet}, GFlowNets proved to be a more general framework to describe distributions over composite objects that can be constructed sequentially \citep{bengio2021gflownetfoundations}. The objective of the GFlowNet is to enforce a conservation law such as the flow-matching conditions in \cref{eq:flow-matching-condition}, indicating that the total amount of flow going into any state is equal to the total outgoing flow, with some residual given by the reward. Alternative conditions, sometimes bypassing the need to work with flows altogether, have been proposed in order to learn these models more efficiently \citep{malkin2022trajectorybalance,madan2022subtb,pan2023fl}. By amortizing inference, and thus treating it as an optimization problem, GFlowNets find themselves deeply rooted in the variational inference literature \citep{malkin2022gfnhvi,zimmermann2022vigfn}, and are connected to other classes of generative models \citep{zhang2022unifyinggfn}. Beyond Bayesian structure learning \citep{deleu2022daggflownet}, GFlowNets have also applications in modeling Bayesian posteriors for variational EM \citep{hu2023gfnem}, combinatorial optimization \citep{zhang2023gfnrobustscheduling}, biological sequence design \citep{jain2022gfnbiological}, as well as scientific discovery at large \citep{jain2023gfnscientific}.

Closely related to our work, \citet{nishikawa2022vbg} proposed to learn the joint posterior $P(G, \theta\mid \gD)$ over structures and parameters with a GFlowNet, combined with Variational Bayes to circumvent the challenge of learning a distribution over continuous quantities $\theta$ with a GFlowNet. \citet{atanackovic2023dyngfn} also used a GFlowNet called \emph{DynGFN} to approximate the posterior of a dynamical system. Similar to \citep{nishikawa2022vbg} though, they used the GFlowNet only to approximate the distribution over graphs $G$, making the parameters $\theta$ a deterministic function of $G$ (i.e., $P(\theta\mid G, \gD) \approx \delta(\theta\mid G;\phi)$). Here, we leverage the recent advances extending these models to general sample spaces \citep{lahlou2023continuousgfn}, including continuous spaces \citep{anonymous2023cflownets}, in order to model the joint posterior within a single GFlowNet.
\section{Experimental results}
\label{sec:experimental-results}
\subsection{Joint posterior over small graphs}
\label{sec:comparison-exact-joint-posterior}

\begin{figure*}[t]
    \includegraphics[width=\linewidth]{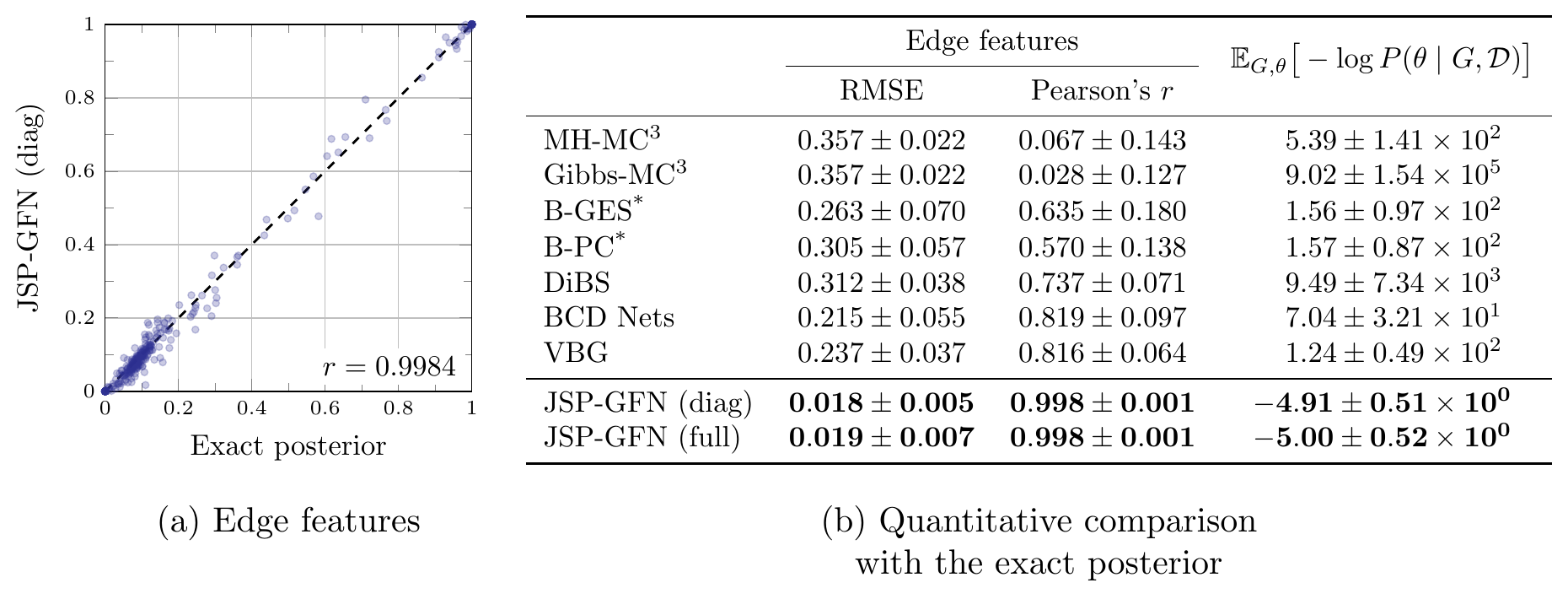}
    \vspace*{-1.5em}
    \caption{Comparison with the exact posterior distribution, on small graphs with $d=5$ nodes. (a) Comparison of the edge features computed with the exact posterior (x-axis) and the approximation given by JSP-GFN (y-axis); each point corresponds to an edge $X_{i} \rightarrow X_{j}$ for each of the 20 datasets. (b) Quantitative evaluation of different methods for joint posterior approximation, both in terms of edge features and cross-entropy of sampling distribution and true posterior $P(\theta \mid G, \gD)$; all values correspond to the mean and $95\%$ confidence interval across the 20 experiments.}
    \label{fig:small-graphs-scatter-table}
\end{figure*}

We can evaluate the accuracy of the approximation returned by JSP-GFN by comparing it with the exact joint posterior distribution $P(G, \theta \mid \gD)$. Computing this exact posterior can only be done in limited cases only, where (1) $P(\theta\mid G, \gD)$ can be computed analytically, and (2) for a small enough $d$ such that all the DAGs can be enumerated in order to compute $P(G\mid \gD)$. We consider here models over $d=5$ variables, with linear Gaussian CPDs. We generate 20 different datasets of $N=100$ observations from randomly generated Bayesian Networks. Details about data generation and modeling are available in \cref{app:posterior-small-graphs}.

The quality of the joint posterior approximations is evaluated separately for $G$ and $\theta$. For the graphs, we compare the approximation and the exact posterior on different marginals of interest, also called \emph{features} \citep{friedman2003ordermcmc}; e.g., the \emph{edge feature} corresponds to the marginal probability of a specific edge being in the graph. \cref{fig:small-graphs-scatter-table} (a) shows a comparison between the edge features computed with the exact posterior and with JSP-GFN, proving that it can accurately approximate the edge features of the exact posterior, despite the modeling bias discussed in \cref{app:comparison-jsp-gfn-other-features}. Compared to other methods in \cref{fig:small-graphs-scatter-table} (b), JSP-GFN offers significantly more accurate approximations of the posterior, at least relative to the edge features. This observation still holds on the path and Markov features \citep{deleu2022daggflownet}; see \cref{app:comparison-jsp-gfn-other-features}.

To evaluate the performance of the different methods as an approximation of the posterior over $\theta$, we also estimate the cross-entropy between the sampling distribution of $\theta$ given $G$ and the exact posterior $P(\theta\mid G, \gD)$. 
This measure will be minimized if the model correctly samples parameters from the true  $P(\theta\mid G, \gD)$; details about this metric are given in \cref{app:evaluation-posterior-parameters}. In \cref{fig:small-graphs-scatter-table} (b), we observe that again both versions of JSP-GFN sample parameters $\theta$ that are significantly more probable under the exact posterior compared to other methods.

\subsection{Gaussian Bayesian Networks from simulated data}
\label{sec:larger-graphs}
To evaluate whether our observations hold on larger graphs, we also evaluated the performance of JSP-GFN on data simulated from larger Gaussian Bayesian Networks, with $d = 20$ variables. In addition to linear CPDs, as in \cref{sec:comparison-exact-joint-posterior}, we experimented with non-linear Gaussian Bayesian Networks, where the CPDs are parametrized by neural networks. Following \citet{lorch2021dibs}, we parametrized the CPDs of each variable with a 2-layer MLP, for a total of $|\theta| = 2,220$ parameters. For both experimental settings, we used datasets of $N = 100$ observations simulated from (randomly generated) Bayesian Networks; additional experimental details are provided in \cref{app:joint-posterior-larger-graphs}.

\begin{figure}[t]
\begin{adjustbox}{center}
\includegraphics[width=1.2\linewidth]{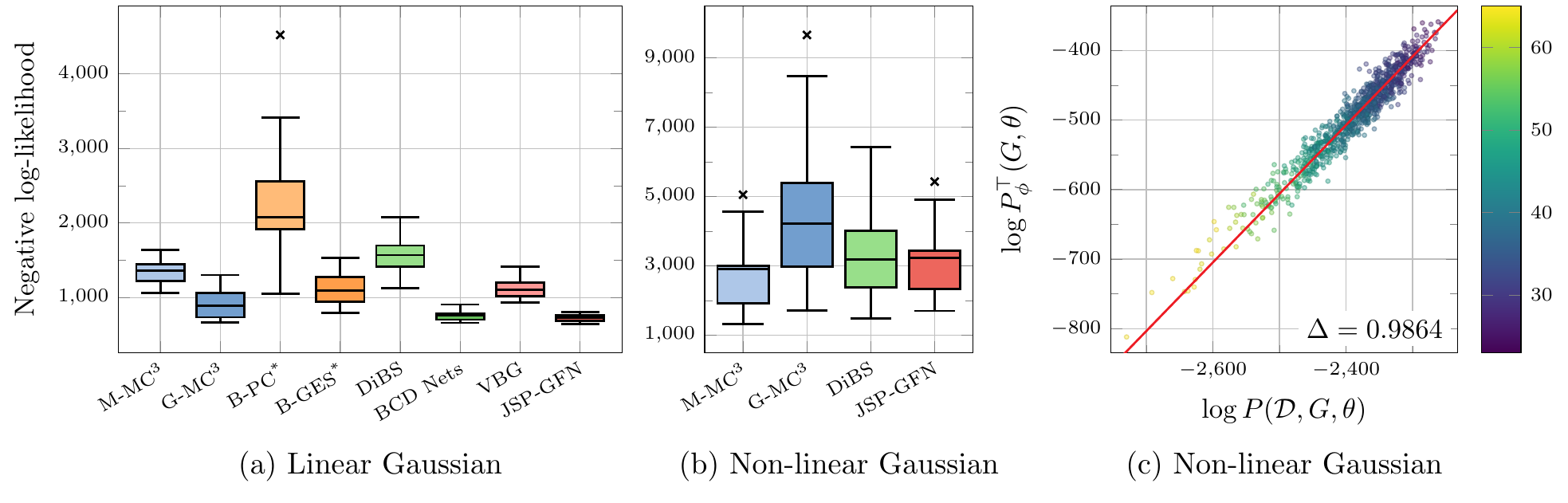}
\end{adjustbox}
\vspace*{-1.5em}
\caption{Evaluation of JSP-GFN on Gaussian Bayesian Networks. (a-b) Comparison of the negative log-likelihood (NLL) on $N'=100$ held-out observations for different Bayesian structure learning methods, aggregated across 20 experiments on different datasets $\gD$. (c) Linear correlation between the log-reward (x-axis) and the terminating state log-probability (y-axis) for $1,000$ samples $(G, \theta)$ from JSP-GFN; the color of each point indicates the number of edges in the correponding graph. $\Delta$ represents the slope of a linear function fitted using RANSAC \citep{fischler1981ransac}.}
\label{fig:20vars-correlation}
\end{figure}

We compared JSP-GFN against two methods based on MCMC (MH-MC\textsuperscript{3} \& Gibbs-MC\textsuperscript{3}; \citealp{madigan1995structuremcmc}) and DiBS \citep{lorch2021dibs} on both experiments, as well as two bootstrapping algorithms (B-GES\textsuperscript{*} \& B-PC\textsuperscript{*}; \citealp{friedman1999bootstrap}), BCD Nets \citep{cundy2021bcdnets} and VBG \citep{nishikawa2022vbg} for the experiment with linear Gaussian CPDs, as they are not applicable for non-linear CPDs. In \cref{fig:20vars-correlation} (a-b), we report the performance of these joint posterior approximations in terms of the (expected) negative log-likelihood (NLL) on held-out observations. We observe that JSP-GFN achieves a lower NLL than any other method on linear Gaussian models and is competitive on non-linear Gaussian~models.

We chose the NLL over some other metrics, typically comparing with the ground-truth graphs used for data generation, since it is more representative of the performance of these methods on downstream tasks (i.e., predictions on unseen data), and measures the quality of the joint posterior instead of only the marginal over graphs. This choice is aligned with the shortcomings of these other metrics highlighted by \citet{lorch2022avici}, and it is further justified in \cref{app:comparison-ground-truth-larger-graphs}. Nevertheless, we also report the expected Structural Hamming Distance (SHD), as well as the area under the ROC curve (AUROC) in \cref{app:comparison-ground-truth-larger-graphs} for completeness. To complement these metrics, and in order to assess the quality of the approximation of the posterior in the absence of reference $P(G, \theta\mid \gD)$, we also show in \cref{fig:20vars-correlation} (c) how the terminating state log-probability $\log P_{\phi}^{\top}(G, \theta)$ of JSP-GFN correlates with the log-reward for a non-linear Gaussian model. Indeed, as stated in \cref{thm:sub-tb-gfn-sampler}, we should ideally have $\log P_{\phi}^{\top}(G, \theta)$ perfectly correlated with $\log R(G, \theta)$ with slope 1, as
\begin{equation}
    \log P_{\phi}^{\top}(G, \theta) \approx \log P(G, \theta\mid \gD) = \log R(G, \theta) - \log P(\gD).
    \label{eq:log-PT-log-reward}
\end{equation}
We can see that there is indeed a strong linear correlation across multiple samples $(G, \theta)$ from JSP-GFN, with a slope $\Delta$ close to 1, suggesting that the GFlowNet is again an accurate approximation of the joint posterior, \emph{at least} around the modes it captures. Details about how $\log P_{\phi}^{\top}(G, \theta)$ is estimated are available in \cref{app:estimation-log-prob-terminating}.

\subsection{Learning biological structures from real data}
\label{sec:real-data}
We finally evaluated JSP-GFN on real-world biological data for two separate tasks: the discovery of protein signaling networks from flow cytometry data \citep{sachs2005causal}, as well as the discovery of a small gene regulatory network from gene expression data. The flow cytometry dataset consists of $N=4,200$ measurements of $d=11$ phosphoproteins from 7 different experiments, meaning that this dataset contains a mixture of both observational and interventional data. Furthermore, this dataset has been discretized into 3 states, representing the level of activity \citep{eaton2007bayesian}. For the gene expression dataset, we used a subset of $N=2,628$ observations of $d=61$ genes from \citep{sethuraman2023nodagsflow}. Details about the experimental setups are available in \cref{app:real-data}.

At this scale, using the whole dataset $\gD$ to evaluate the reward becomes impractical, especially for non-linear models. Fortunately, we showed in \cref{sec:minibatch-training} that we can use an (unbiased) estimate of the reward, based on mini-batches of data, in place of $R(G, \theta)$ in the loss function \cref{eq:sub-tb-loss-error}. In both experiments, we used non-linear models, where all the CPDs are parametrized with a 2-layer MLP. \cref{fig:real-data-results} shows similar correlation plots as \cref{fig:20vars-correlation} (c), along with an evaluation of the NLL on unseen observations and interventions. Beyond the ability to work with real data, these experiments allow us to highlight some other capacities of JSP-GFN: (1) handling discrete and (2) interventional data (flow cytometry), as well as (3) learning a distribution over larger graphs (gene expression).

\begin{figure}
    \begin{adjustbox}{center}
    \includegraphics[width=1.25\linewidth]{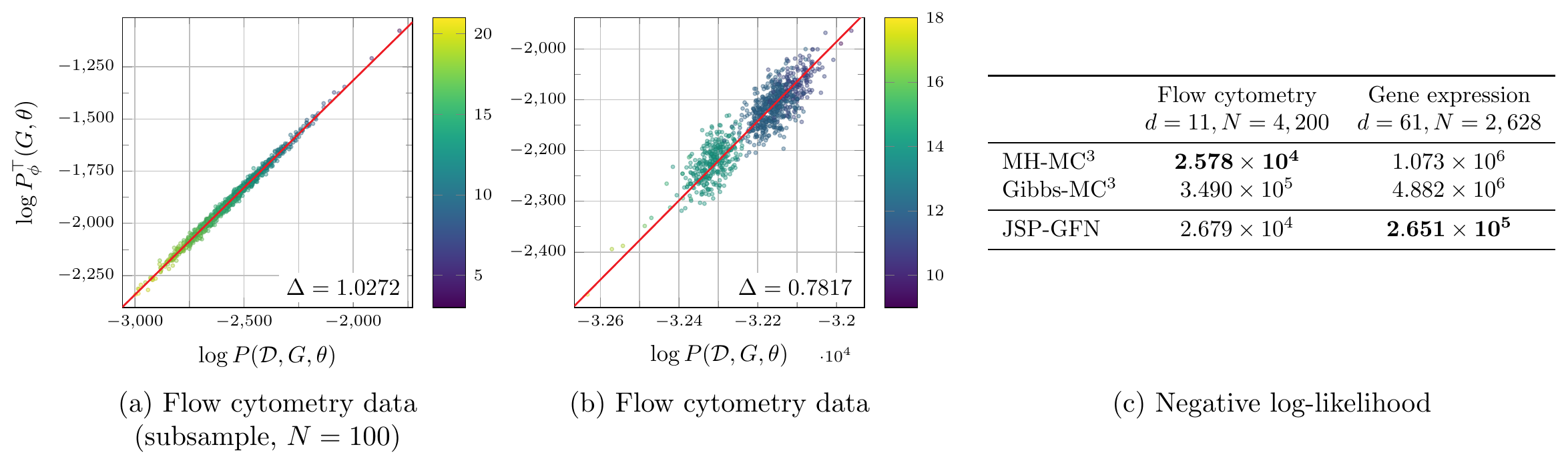}
    \end{adjustbox}
    \vspace*{-1.5em}
    \caption{Performance of JSP-GFN on real-world biological data. (a) Comparison of the terminating state log-probability $\log P_{\phi}^{\top}(G, \theta)$ returned by JSP-GFN with the log-reward $\log R(G, \theta)$ on a subsample of $N=100$ datapoints of the flow cytometry dataset $\gD$. (b) Same comparison with the full dataset $\gD$ of size $N=4,200$. (c) Comparison of JSP-GFN with methods based on MCMC on both flow cytometry data and gene expression data, in terms of negative (interventional) log-likelihood on held-out data.}
    \label{fig:real-data-results}
\end{figure}

To measure the quality of the posterior approximation returned by JSP-GFN, we compare in \cref{fig:real-data-results} the terminating state log-probability $\log P_{\phi}^{\top}(G, \theta)$ with the log-reward $\log R(G, \theta)$, similar to \cref{fig:20vars-correlation}. We observe that there is correlation between these two quantities; unlike in \cref{fig:20vars-correlation} though, we observe that the slope is not close to $1$, suggesting that JSP-GFN \emph{underestimates} the probability of $(G, \theta)$. We also observe that the graphs are ``clustered'' together; this can be explained by the fact that the posterior approximation is concentrated at only a few graphs, since the size of the dataset $\gD$ is larger. To confirm this observation, we show in \cref{fig:real-data-results} (a) a similar plot on a subsample of $N=100$ datapoints randomly sampled from $\gD$, matching the experimental setting of \cref{sec:larger-graphs}. In this case, we observe a much closer linear fit, with a slope closer to $1$.

In addition to the comparison to the log-reward, we also compare in \cref{fig:real-data-results} (c) JSP-GFN with 2 methods based on MCMC in terms of their negative log-likelihood on held-out data. We can see that JSP-GFN is competitive, and even out-performs MCMC on the more challenging problem of the discovery of gene regulatory networks from gene expression data, where the dimensionality of the problem is much larger ($d=61$). Note that the values reported for the discovery of protein signaling networks from flow cytometry data correspond to the negative \emph{interventional} log-likelihood, on interventions unseen in $\gD$.

\section{Conclusion}
\label{sec:conclusion}
We have presented JSP-GFN, an approach to approximate the joint posterior distribution over the structure of a Bayesian Network along with its parameters using a single GFlowNet. We have shown that our method faithfully approximates the joint posterior on both simulated and real data, and compares favorably against existing Bayesian structure learning methods. In line with \cref{app:broader-impact-limitations}, future work should consider using more expressive distributions, such as those parametrized by normalizing flows or diffusion processes, to approximate the posteriors over continuous parameters, which would enable Bayesian inference over parameters in more complex generative models.

\section*{Acknowledgements}
\label{sec:acknowledgements}
We would like to thank Ant\'{o}nio~G\'{o}is, Salem~Lahlou, Romain~Lopez, Cristian~Dragos~Manta, and Mansi~Rankawat for the useful discussions about the project and their feedback on the paper. We would like to also thank Lars Lorch for releasing the \href{https://github.com/larslorch/dibs}{code} for DiBS \citep{lorch2021dibs}, along with useful baselines used in this paper, as well as Chris Cundy for releasing the \href{https://github.com/ermongroup/BCD-Nets}{code} for BCD Nets and for his help with the code. This research was enabled by compute resources provided by Mila (\href{https://mila.quebec/}{mila.quebec}).

% References
\bibliographystyle{plainnat}
\bibliography{references}

\newpage
\appendix
\numberwithin{equation}{section}
\numberwithin{figure}{section}
\numberwithin{table}{section}

{\LARGE \textbf{Appendix}}

\section{Positioning JSP-GFN in the Bayesian structure learning literature}
\label{app:positioning-literature}
We give a comparison between our method JSP-GFN, and various methods based on variational inference in \cref{tab:comparison-variational-inference}. We also include methods based on GFlowNets, namely DAG-GFlowNet \citep{deleu2022daggflownet} and VBG \citep{nishikawa2022vbg}, as they are effectively variational methods \citep{malkin2022gfnhvi,zimmermann2022vigfn}.

\newcommand{\ccomp}{\textcolor{Green}{\cmark}}
\newcommand{\xcomp}{\textcolor{Red}{\xmark}}
\newcommand{\qcomp}{\tikz[baseline=-0.75ex]\fill[Gray] (0,0) circle (.5ex);}
\begin{table*}[ht]
    \centering
    \caption{Comparison of different methods based on variational inference and GFlowNets for Bayesian structure learning. See the text for a detailed description of each category.}
    \vspace*{0.5em}
    \begin{adjustbox}{center}
    \begin{tabular}{lccccccc}
        \toprule
         & Joint & Non & DAG & Discrete & Max. & \multirow{2}{*}{Sampler} & Mini \\[-0.1em]
         & $G$ \& $\theta$ & Linear & Support & Obs. & Parents & & Batch \\
        \midrule
        VCN \citep{annadani2021vcn} & \xcomp & \xcomp & \xcomp & \qcomp & \xcomp & \ccomp & \xcomp \\
        BCD Nets \citep{cundy2021bcdnets} & \ccomp & \xcomp & \ccomp & \xcomp & \xcomp & \ccomp & \xcomp \\
        DiBS \citep{lorch2021dibs} & \ccomp & \ccomp & \xcomp & \ccomp & \xcomp & \xcomp & \ccomp \\
        \textsc{Trust} \citep{wang2022trust} & \xcomp & \xcomp & \ccomp & \qcomp & \xcomp & \qcomp & \xcomp \\
        VI-DP-DAG \citep{charpentier2022differentiabledag} & \xcomp & \ccomp & \ccomp & \xcomp & \xcomp & \ccomp & \xcomp \\
        AVICI \citep{lorch2022avici} & \xcomp & \ccomp & \xcomp & \qcomp & \xcomp & \ccomp & \xcomp \\
        DAG-GFlowNet \citep{deleu2022daggflownet} & \xcomp & \xcomp & \ccomp & \ccomp & \ccomp & \ccomp & \xcomp\\
        VBG \citep{nishikawa2022vbg} & \ccomp & \qcomp & \ccomp & \qcomp & \ccomp & \ccomp & \xcomp\\
        \midrule
        \textbf{JSP-GFN (Ours)} & \ccomp & \ccomp & \ccomp & \ccomp & \ccomp & \ccomp & \ccomp\\
        \bottomrule
    \end{tabular}
    \end{adjustbox}
    \label{tab:comparison-variational-inference}
\end{table*}

\paragraph{Joint $\bm{G}$ \& $\bm{\theta}$.} This category indicates whether the model can approximate the joint posterior distribution $P(G, \theta\mid \gD)$ over both graphical structures $G$ and parameters of the CPDs $\theta$, or if they are limited to approximating the marginal posterior $P(G\mid \gD)$. As we have seen in \cref{sec:introduction}, approximations of the marginal posteriors limit the classes of models these methods can be applied to, namely those where the marginal likelihood can be computed analytically.

\paragraph{Non-Linear.} This indicates whether the model can be applied to Bayesian Networks whose CPDs are parametrized by a non-linear function (e.g., a neural network). While most methods approximating the marginal distribution may be applied to non-linear CPDs parametrized by a Gaussian Process \citep{vonkugelgen2019gpstructure}, we only consider here methods that explicitly handle non-linearity (e.g., this eliminates DAG-GFlowNet \citep{deleu2022daggflownet}, since the authors only considered a linear Gaussian and discrete settings). \citet{annadani2021vcn} mentioned the extension of VCN to non-linear causal models as future work. While VBG \citep{nishikawa2022vbg} has only been applied to linear Gaussian models, the framework may also be applicable to non-linear models.

\paragraph{DAG Support.} This indicates whether the posterior approximation is guaranteed to have support over the space of DAGs. VCN \citep{annadani2021vcn} and DiBS \citep{lorch2021dibs} only encourage acyclicity via a prior term, inspired by continuous relaxations of the acyclicity constraint \citep{zheng2018notears}, meaning the those methods may return graphs containing cycles; for example in practice, \citet{deleu2022daggflownet} reports that $1.50\%$ of the graphs returned by DiBS contain cycles (for $d=11$). AVICI \citep{lorch2022avici} uses a similar prior term when applied to Structural Causal Models (SCMs), although in general this framework does not enforce acyclicity by design, to allow flexibility on other domains (e.g., for modeling gene regulatory networks; \citealp{sethuraman2023nodagsflow}). \textsc{Trust} \citep{wang2022trust} guarantees acyclicity via a distribution over variable orders, that can be learned using Sum-Product Networks.

\paragraph{Discrete Observations.} This indicates whether the posterior approximation may be applied to Bayesian Networks with discrete random variables. Although VCN \citep{annadani2021vcn} was only applied to linear Gaussian models, the authors mention that this approach is also applicable to discrete random variables. Similarly, while there is no experiment in \citep{lorch2021dibs} applying DiBS to a discrete domain, this extension can be found in the official code released. AVICI \citep{lorch2022avici} assumes access to a generative model $P(\gD\mid G)$, making it possibly applicable to discrete domains as well. Since it builds on DAG-GFlowNet \citep{deleu2022daggflownet}, VBG \citep{nishikawa2022vbg} should also inherit its properties, and therefore may also be applicable to discrete random variables. Similarly, since \textsc{Trust} \citep{wang2022trust} may use DiBS as its underlying routine for structure learning, it should also inherit the properties of DiBS.

\paragraph{Maximum Parents.} This category indicates whether a maximum number of parents can be specified for each variable in the DAGs returned by each method. Although this is a very common constraint used in the structure learning literature to improve efficiency \citep{koller2009pgm}, none of the variational methods for Bayesian structure learning allow for such a (hard) constraint. Some methods may introduce a sparsity-inducing prior \citep{lorch2021dibs,cundy2021bcdnets}, or use post-processing of the sampled DAGs \citep{charpentier2022differentiabledag} to reduce the number of edges in the sampled graphs. This can be naturally added in a GFlowNet, by masking out the actions adding certain edges that would violate this constraint; in fact, in the official code released for both DAG-GFlowNet \citep{deleu2022daggflownet} and VBG \citep{nishikawa2022vbg} (both using the same environment), this option is available.

\paragraph{Sampler.} This category indicates whether one can sample graphs and parameters from the model once fully trained. DiBS \citep{lorch2021dibs} uses a particle-based approach \citep{liu2016svgd} to approximate the posterior (marginal, or joint), and therefore the number of particles is fixed ahead of time; once fully trained, it is impossible to sample new pairs of graphs and parameters from this model. \textsc{Trust} \citep{wang2022trust} can also use Gadget (an MCMC approach to Bayesian structure learning; \citealp{viinikka2020gadget}) as its routine for structure learning, and therefore this would allow sampling from the trained model.

\paragraph{Mini-Batch.} This indicates whether the model can be updated with mini-batch of observations from $\gD$, or if the full dataset must be used. DiBS \citep{lorch2021dibs} uses mini-batch updates for their experiments on protein signaling networks, where the number of datapoints $N=7466$ is large. Note that unlike JSP-GFN here, neither DAG-GFlowNet \citep{deleu2022daggflownet} nor VBG \citep{nishikawa2022vbg} may be updated using mini-batches, since the maginalization over $\theta$ makes all observations in $\gD$ mutually \emph{dependent} (conditioned on $G$). See \cref{app:minibatch-training} for details on how to use mini-batch training with JSP-GFN.

\section{Broader impact \& limitations}
\label{app:broader-impact-limitations}

\subsection{Broader impact}
\label{app:broader-impact}
While structure learning of Bayesian Networks constitutes one of the foundations of \emph{causal discovery} (also known as causal structure learning), it is important to emphasize that shy of any assumptions, the relationships learned from observations in a Bayesian Network are in general \emph{not} causal, but merely statistical associations. As such, care must be taken interpreting the graphs sampled with JSP-GFN (or any other Bayesian structure learning method considered in this paper) as being causal. This is especially true when applying structure learning methods to the problem of scientific discovery \citep{jain2023gfnscientific}. Assumptions that would allow causal interpretation of the graphs include using interventional data (as in \cref{sec:real-data}), or parametric assumptions.

Although the graphs returned by JSP-GFN are not guaranteed to be causal, treating structure learning from a Bayesian perspective allows us to view identification of the causal relationships in a \emph{softer} way. Indeed, instead of returning a single graph which could be harmful from a causal perspective (notably due to the lack of data, see also \cref{sec:introduction}), having a posterior distribution over Bayesian Networks allows us to average out any possible model that can explain the data.

\subsection{Limitations}
\label{app:limitations}
\paragraph{Expressivity of the posterior approximation.} Throughout this paper, we use Normal distributions (with a diagonal covariance, except for \emph{JSP-GFN (full)} in \cref{sec:comparison-exact-joint-posterior}) to parametrize the approximation of the posterior over parameters $P_{\phi}(\theta\mid G, \mathrm{stop})$ (see \cref{sec:forward-transition-probabilities}). This limits its expressivity to unimodal distributions only, and is an assumption which is commonly used with Bayesian neural networks. However in general, the posterior distribution $P(\theta\mid G, \gD)$ may be highly multimodal, especially when the model is non-linear (the posterior is Normal when the model is linear Gaussian, see \cref{app:posterior-linear-gaussian-proof}). To see this, consider a non-linear model whose CPDs are parametrized by a 2-layer MLP (as in \cref{sec:larger-graphs,sec:real-data}). The weights and biases of both layers can be transformed in such a way that the hidden units get permuted, while preserving the outputs; in other words, there are many sets of parameters $\theta$ leading to the same likelihood function, and under mild assumptions on the priors $P(\theta\mid G)$ and $P(G)$, they would have the same posterior probability $P(\theta\mid G, \gD)$.

To address this issue of unimodality, we can use more expressive posterior approximations $P_{\phi}(\theta\mid G, \mathrm{stop})$, such as ones parametrized with diffusion-based models, or with normalizing flows; both of these models are drop-in replacements in JSP-GFN, since their likelihood can be explicitly computed. An alternative is also to consider multiple steps of a continuous GFlowNet \citep{lahlou2023continuousgfn}, instead of a single one, to generate $\theta$.

\paragraph{Biological plausibility of the acyclicity assumption.} One of the strengths of JSP-GFN, and DAG-GFlowNet before it \citep{deleu2022daggflownet}, is the capacity to obtain a distribution over the DAG structure of a Bayesian Network (and its parameters). The acyclicity assumption is particularly important in order to properly define the likelihood model in \cref{eq:bayesian-network}. However in some domains, such as biological systems, there may exist some feedback processes that cannot be captured by acyclic graphs \citep{mooij2020jci}. In particular, the DAGs found in \cref{sec:real-data} and \cref{app:real-data} must be carefully interpreted. As a general framework though, the GFlowNet used in JSP-GFN can be adapted to ignore the acyclic nature of the graphs sampled by ignoring parts of the mask $\vm$ in Sec.~\ref{sec:forward-transition-probabilities}. Alternatively, we can view the generation of a cyclic graph by unrolling it, as in \citet{atanackovic2023dyngfn}.

\section{Details about Generative Flow Networks}
\label{app:detail-gfn}
Throughout this section, we will use both $P_{F}$ and $P_{\phi}$ (to emphasize the parametrization on $\phi$, as in the main text) to denote equally the forward transition probability---the notation $P_{F}$ being more commonly used in the literature on GFlowNet \citep{bengio2021gflownetfoundations}.

\subsection{Alternative conditions}
\label{app:alternative-conditions}
GFlowNets were initially introduced using the flow-matching conditions \citep{bengio2021gflownet}, as described in \cref{sec:gflownet}. However, there have been multiple alternative conditions that, once satisfied, also offer the same guarantees as the original flow-matching conditions (namely, a GFlowNet satisfying any of those conditions would sample complete states proportionally to the reward).
\begin{figure}[t]
\begin{adjustbox}{center}
\includegraphics[width=1.1\linewidth]{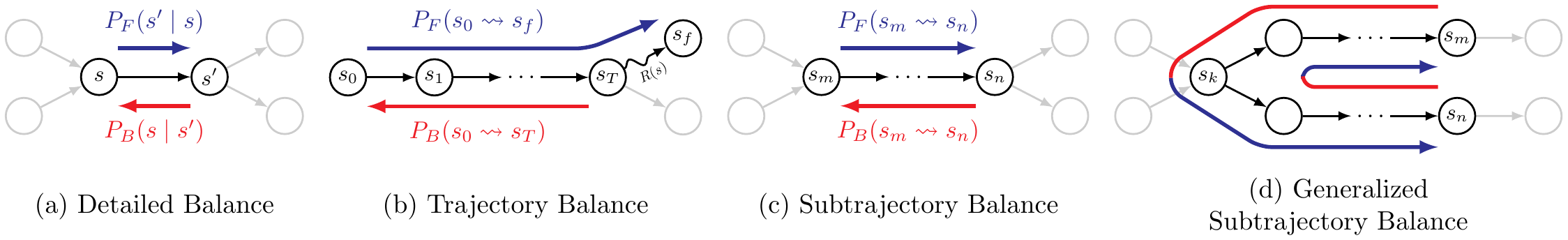}
\end{adjustbox}
\caption{Illustration of the different GFlowNet objectives. (a) The detailed balance condition operates at the level of transitions $s \rightarrow s'$, whereas (b) the trajectory balance condition operates on complete trajectories $s_{0} \rightsquigarrow s_{f}$. (c) The subtrajectory balance condition operates on partial trajectories $s_{m}\rightsquigarrow s_{n}$, and can be (d) generalized to undirected paths with a common ancestor $s_{k}$. We use $P_{F}(s_{m} \rightsquigarrow s_{n})$ to denote the product of $P_{F}$ along the path $s_{m}\rightsquigarrow s_{n}$ (and similarly for $P_{B}$).}
\label{fig:gflownet-objectives}
\end{figure}

One of those alternative conditions are the \emph{detailed balance} conditions \citep{bengio2021gflownetfoundations}, inspired by the literature on Markov chains. These conditions are given for any transition $s \rightarrow s'$ in the GFlowNet as
\begin{equation}
    F(s)P_{F}(s'\mid s) = F(s')P_{B}(s\mid s')
    \label{eq:db-general}
\end{equation}
where $F(s)$ is a flow function, that may also be parametrized by a neural network. The detailed balance condition is illustrated in \cref{fig:gflownet-objectives} (a). \citet{bengio2021gflownetfoundations} showed that if the detailed balance conditions are satisfied for all the transitions $s \rightarrow s'$ in the GFlowNet, then the distribution induced by the GFlowNet is also proportional to $R(s)$. \citet{deleu2022daggflownet} adapted the detailed balance conditions in the case where all the states of the GFlowNet are complete, in order to avoid having to learn a separate flow function (see \cref{sec:dag-gflownet}).

Another alternative condition called the \emph{detailed balance} conditions \citep{malkin2022trajectorybalance}, operates not at the level of transitions, but at the level of complete trajectories. For a complete trajectory $\tau = (s_{0}, s_{1}, \ldots, s_{T}, s_{f})$, the trajectory balance condition is given by
\begin{equation}
    Z\prod_{t=1}^{T}P_{F}(s_{t+1}\mid s_{t}) = R(s_{T})\prod_{t=1}^{T-1}P_{B}(s_{t}\mid s_{t+1}),
    \label{eq:tb-general}
\end{equation}
with the convention $s_{T+1} = s_{f}$, and where $Z$ is the partition function of the distribution (i.e., $Z = \sum_{x\in\gX}R(x)$); in practice, $Z$ is a parameter of the model that is being learned alongside the forward and backward transition probabilities. The trajectory balance condition is illustrated in \cref{fig:gflownet-objectives} (b). Again, if the trajectory balance conditions are satisfied for all complete trajectories in the GFlowNet, then the induced distribution is proportional to $R(s)$.

\subsection{Subtrajectory balance conditions}
\label{app:subtrajectory-balance-conditions}
Also introduced in \citep{malkin2022trajectorybalance}, the \emph{subtrajectory balance} conditions are a generalization of both the detailed balance and trajectory balance conditions to partial trajectories of arbitrary length. For a partial trajectory $\tau = (s_{m}, s_{m+1}, \ldots, s_{n})$, the subtrajectory balance condition is given by
\begin{equation}
    F(s_{m})\prod_{t=m}^{n-1}P_{F}(s_{t+1}\mid s_{t}) = F(s_{n})\prod_{t=m}^{n-1}P_{B}(s_{t}\mid s_{t+1}),
    \label{eq:subtb-general}
\end{equation}
where again $F(s)$ is a flow function (as in \cref{eq:db-general}). This condition encompasses both conditions in \cref{app:alternative-conditions}, since we can recover the detailed balance condition in \cref{eq:db-general} with partial trajectories of length 1 (i.e., transitions), and also the trajectory balance condition in \cref{eq:tb-general} with complete trajectories (note that $F(s_{0}) = Z$; \citealp{bengio2021gflownetfoundations}). The subtrajectory balance condition is illustrated in \cref{fig:gflownet-objectives} (c). \citet{madan2022subtb} also proposed to combine subtrajectory balance conditions for partial trajectories of different lengths to create a novel objective called $\mathrm{SubTB}(\lambda)$, inspired by $\mathrm{TD}(\lambda)$ in the reinforcement learning literature.

This subtrajectory balance condition in \cref{eq:subtb-general} can also be generalized to undirected paths going ``back and forth'' \citep{malkin2022trajectorybalance}. For an undirected path between $s_{m}$ and $s_{n}$, this (generalized) subtrajectory balance condition can be written as
\begin{equation}
    F(s_{m})\prod_{t=k}^{m-1}P_{B}(s_{t}\mid s_{t+1})\prod_{t=k}^{n-1}P_{F}(s_{t+1}\mid s_{t}) = F(s_{n})\prod_{t=k}^{n-1}P_{B}(s_{t}\mid s_{t+1})\prod_{t=k}^{m-1}P_{F}(s_{t+1}\mid s_{t}),
    \label{eq:subtb-general2}
\end{equation}
where $s_{k}$ is a common ancestor of both $s_{m}$ and $s_{n}$. This condition is illustrated in \cref{fig:gflownet-objectives} (d).  While these subtrajectory balance conditions (generalized or not) offer more flexibility, they are guaranteed to yield a GFlowNet inducing a distribution proportional to $R(s)$ \emph{only} if these conditions are satisfied for all the partial trajectories of any length. In particular, they provide no guarantee in general if those conditions are satisfied for all partial trajectories of fixed length, which is the case in this paper (see \cref{sec:sub-tb-conditions}). Although this result may be extended with weaker assumptions, we prove in \cref{app:proof-marginal-distribution} that the GFlowNet does induce a distribution $\propto R(s)$ in our case.

\subsection{Proofs}
\label{app:proofs}

\subsubsection{Subtrajectory balance conditions for undirected paths of length 3}
\label{app:proof-subtb3}
\begin{figure}[t]
\centering
\includegraphics[width=\linewidth]{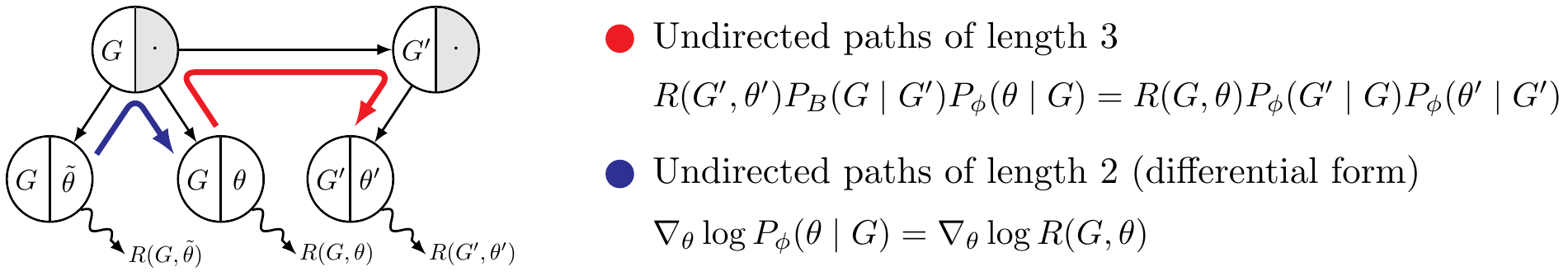}
\caption{Illustration of the undirected paths of length 3 (red) and of length 2 (blue) considered in this paper, and their corresponding subtrajectory balance conditions.}
\label{fig:sub-trajectories}
\end{figure}

Consider an undirected path of length 3 of the form $(G, \theta) \leftarrow (G, \cdot) \rightarrow (G', \cdot) \rightarrow (G', \theta')$, where $G'$ is the result of adding a new edge to the DAG $G$ (see \cref{fig:sub-trajectories}). Since the state $(G, \cdot)$ is a common ancestor of both complete states $(G, \theta)$ and $(G', \theta')$, we can write the subtrajectory balance conditions \cref{eq:subtb-general2} as
\begin{equation}
    F(G, \theta)P_{B}(G\mid \theta)P_{F}(G'\mid G)P_{F}(\theta'\mid G') = F(G', \theta')P_{B}(G'\mid \theta')P_{B}(G \mid G')P_{F}(\theta\mid G),
    \label{eq:subtb3-raw}
\end{equation}
where we abuse the notation $P_{B}(G\mid \theta)$ again to denote $P_{B}\big((G, \cdot)\mid (G, \theta)\big)$. In fact, since the complete state $(G, \theta) \in \gX$ has only a single parent state $(G, \cdot)$, we necessarily have $P_{B}(G\mid \theta) = 1$ (and similarly for $(G', \theta')$). Furthermore, we can use the observation from \citep{deleu2022daggflownet} to write the flow $F(G, \theta)$ of a complete state $(G, \theta)$ as a function of its reward
\begin{equation}
    F(G, \theta) = \frac{R(G, \theta)}{P_{F}\big(s_{f}\mid (G, \theta)\big)}.
    \label{eq:flow-complete-state}
\end{equation}
We can simplify \cref{eq:flow-complete-state} even further by observing that in the GFlowNet used here, $s_{f}$ is the \emph{only} child of the complete state $(G, \theta) \in \gX$. In other words, a complete state $(G, \theta)$ is not directly connected to any other $(G, \tilde{\theta})$; this is the (infinitely wide) tree structure rooted at $(G, \cdot)$ mentioned in \cref{sec:gflownet-structure}. Since $s_{f}$ is the only child of $(G, \theta)$, we then necessarily have $P_{F}\big(s_{f}\mid (G, \theta)\big) = 1$, and therefore $F(G, \theta) = R(G, \theta)$. With these simplifications, \cref{eq:subtb3-raw} becomes
\begin{equation}
    R(G, \theta)P_{F}(G'\mid G)P_{F}(\theta'\mid G') = R(G', \theta')P_{B}(G\mid G')P_{F}(\theta\mid G),
    \label{eq:subtb3-appendix}
\end{equation}
which is the subtrajectory balance condition in \cref{eq:sub-tb3-conditions}.

\subsubsection{Integrating undirected paths of length 2}
\label{app:proof-diffsubtb2}
Similar to \cref{app:proof-subtb3}, we consider here an undirected of length 2 of the form $(G, \theta) \leftarrow (G, \cdot) \rightarrow (G, \tilde{\theta})$ (see \cref{fig:sub-trajectories}). Since $(G, \cdot)$ is a common ancestor (a common parent in this case) of both complete states $(G, \theta)$ and $(G, \tilde{\theta})$, we can write the subtrajectory balance conditions \cref{eq:subtb-general2} as
\begin{equation}
    F(G, \theta)P_{B}(G\mid \theta)P_{F}(\tilde{\theta}\mid G) = F(G, \tilde{\theta})P_{B}(G\mid \tilde{\theta})P_{F}(\theta\mid G).
\end{equation}
Using the same simplifications as in \cref{app:proof-subtb3} ($P_{B}(G\mid \theta) = P_{B}(G\mid \tilde{\theta}) = 1$), we get the following subtrajectory balance conditions for the undirected paths of length 2
\begin{equation}
    R(G, \theta)P_{F}(\tilde{\theta}\mid G) = R(G, \tilde{\theta})P_{F}(\theta\mid G).
    \label{eq:subtb2-conditions}
\end{equation}
Note that these conditions are effectively redundant if the SubTB conditions over undirected paths of length 3 \cref{eq:sub-tb3-conditions} are satisfied for all possible pairs of complete states $(G, \theta)$ and $(G', \theta')$. Indeed, if we write these conditions between $(G, \theta)$ and $(G', \theta')$ on the one hand, and between $(G, \tilde{\theta})$ and $(G', \theta')$ on the other hand (with a fixed $G'$ and $\theta'$)
\begin{align}
    R(G', \theta')P_{B}(G\mid G')P_{F}(\theta\mid G) &= R(G, \theta)P_{F}(G'\mid G)P_{F}(\theta'\mid G)\\
    R(G', \theta')P_{B}(G\mid G')P_{F}(\tilde{\theta}\mid G) &= R(G, \tilde{\theta})P_{F}(G'\mid G)P_{F}(\theta'\mid G),
\end{align}
we get the same subtrajectory balance conditions over undirected paths of length 2 as in \cref{eq:subtb2-conditions}:
\begin{equation}
    \frac{R(G, \theta)}{P_{F}(\theta\mid G)} = \frac{R(G', \theta')P_{B}(G\mid G')}{P_{F}(G'\mid G)P_{F}(\theta'\mid G')} = \frac{R(G, \tilde{\theta})}{P_{F}(\tilde{\theta}\mid G)}.
    \label{eq:subtb3-to-subtb2}
\end{equation}
However, since the SubTB conditions \cref{eq:sub-tb3-conditions} are only satisfied approximately in practice, it might be advantageous to also satisfy \cref{eq:subtb2-conditions} in addition to those in \cref{eq:sub-tb3-conditions}. The equation above provides an alternative way to express \cref{eq:subtb2-conditions}. Indeed, \cref{eq:subtb3-to-subtb2} shows that the function
\begin{equation}
    f_{G}(\theta) \triangleq \log R(G, \theta) - \log P_{F}(\theta\mid G)
\end{equation}
is constant, albeit with a constant that depends on the graph $G$. Since this function is differentiable, this is equivalent to $\nabla_{\theta}f_{G}(\theta) = 0$, and therefore we get the differential form of the subtrajectory balance conditions in \cref{eq:differential-sub-tb2-condition}
\begin{equation}
     \nabla_{\theta}\log P_{F}(\theta\mid G) = \nabla_{\theta}\log R(G, \theta).
    \label{eq:diffsubtb2-appendix}
\end{equation}
As we saw in \cref{sec:learning-gflownet}, one way to enforce the SubTB conditions over undirected paths of length 3 is to create a learning objective that encourages these conditions to be satisfied, and optimizing it using gradient methods. The learning objective has the form $\gL(\phi) = \E_{\pi}[\tilde{\Delta}^{2}(\phi)]$, where $\tilde{\Delta}(\phi)$ is a non-linear residual term
\begin{equation}
    \tilde{\Delta}(\phi) = \log \frac{R(G',\theta')P_{B}(G\mid G')P_{\phi}(\theta\mid G)}{R(G, \theta)P_{\phi}(G'\mid G)P_{\phi}(\theta'\mid G')}.
    \label{eq:original-objective}
\end{equation}
Suppose that the parameters $\phi$ of the GFlowNet are such that the subtrajectory balance conditions in \cref{eq:diffsubtb2-appendix} are satisfied for any $(G, \theta)$. Although this assumption is unlikely to be satisfied in practice, they will eventually be approximately satisfied over the course of optimization, given the discussion above about the relation between \cref{eq:subtb2-conditions} and \cref{eq:subtb3-appendix}. Since $\theta$ and $\theta'$ depend on $\phi$ (via the reparametrization trick since they are sampled on-policy, see \cref{sec:experimental-results}), taking the derivative of $\tilde{\Delta}^{2}(\phi)$, we get
\begin{align}
    \frac{d}{d\phi}\tilde{\Delta}^{2}(\phi) = \tilde{\Delta}(\phi)\cdot\frac{d}{d\phi}\big[\log R(G', \theta') &+ \log P_{\phi}(\theta\mid G)\\
    &- \log R(G, \theta) - \log P_{\phi}(G'\mid G) - \log P_{\phi}(\theta'\mid G')\big].\nonumber
\end{align}
Using the law of total derivatives, we have
\begin{align}
    \frac{d}{d\phi} \big[\log P_{\phi}(\theta\mid G) - \log R(G, \theta)\big] &= \underbrace{\left[\frac{\partial}{\partial \theta}\log P_{\phi}(\theta\mid G) - \frac{\partial}{\partial \theta}\log R(G, \theta)\right]}_{=\,0} \frac{d\theta}{d\phi} + \frac{\partial}{\partial \phi}\log P_{\phi}(\theta\mid G)\label{eq:total-derivative-score-matching}\\
    &= \frac{\partial}{\partial \phi}\log P_{\phi}(\theta\mid G),
\end{align}
and similarly for the terms in $(G', \theta')$. The derivative of the objective then becomes
\begin{equation}
    \frac{d}{d\phi}\tilde{\Delta}^{2}(\phi) = \tilde{\Delta}(\phi)\left[\frac{\partial}{\partial \phi}\log P_{\phi}(\theta\mid G) - \frac{\partial}{\partial \phi}\log P_{\phi}(\theta'\mid G') - \frac{d}{d\phi}\log P_{\phi}(G'\mid G)\right].
    \label{eq:derivative-objective}
\end{equation}
An alternative way to obtain the same derivative in \cref{eq:total-derivative-score-matching} is to take $d\theta / d\phi = 0$ instead, meaning that we would not differentiate through $\theta$ (and $\theta'$). Using the stop-gradient operation $\bot$, this shows that the following objective
\begin{equation}
    \gL(\phi) \triangleq \E_{\pi}\big[\Delta(\phi)^{2}\big] = \E_{\pi}\left[\left(\log \frac{R\big(G',\bot(\theta')\big)P_{B}(G\mid G')P_{\phi}\big(\bot(\theta)\mid G\big)}{R\big(G, \bot(\theta)\big)P_{\phi}(G'\mid G)P_{\phi}\big(\bot(\theta')\mid G'\big)}\right)^{2}\right]
    \label{eq:surrogate-objective-appendix}
\end{equation}
takes the same value and has the same gradient \cref{eq:derivative-objective} as the objective in \cref{eq:original-objective} when the subtrajectory balance conditions (in differential form) over undirected paths of length 2 are satisfied.

While optimizing \cref{eq:surrogate-objective-appendix} alone leads to eventually satisfying the subtrajectory balance conditions over undirected paths of length 2, it may be advantageous to explicitly encourage this behavior, especially in cases where $d$ is larger and/or for non-linear models. We can incorporate some penalty to the loss function, such as
\begin{align}
    \tilde{\gL}(\phi) = \gL(\phi) + \frac{\lambda}{2} \E_{\pi}\Big[\big\|\nabla_{\theta} \log P_{\phi}(\theta\mid G) &- \nabla_{\theta} \log R(G, \theta) \big\|^{2}\label{eq:objective-penalty-appendix}\\ & + \big\|\nabla_{\theta'}\log P_{\phi}(\theta'\mid G') - \nabla_{\theta'}\log R(G', \theta')\big\|^{2}\Big]\nonumber
\end{align}

\subsubsection{Marginal distribution over complete states}
\label{app:proof-marginal-distribution}

\subtbgfnsampler*

\begin{proof}
We assume that the SubTB conditions are satisfied for all undirected paths of length 3 between any $(G, \theta)$ and $(G', \theta')$, that is
\begin{equation}
    R(G', \theta')P_{B}(G\mid G')P_{\phi}(\theta\mid G) = R(G, \theta)P_{\phi}(G'\mid G)P_{\phi}(\theta'\mid G').
\end{equation}
Let $G \neq G_{0}$ be a fixed DAG different from the initial state, and $\theta$ a set of corresponding parameters. Let $\tau = (G_{0}, \ldots, G_{T-1}, G)$ be an arbitrary trajectory from $G_{0}$ to $G$, where we use the convention $G_{T} = G$. For any $t<T$, if $\theta_{t}$ is a fixed set of parameters associated with $G_{t}$, then the SubTB conditions above can written for every timestep as
\begin{equation}
    R(G_{t+1}, \theta_{t+1})P_{B}(G_{t}\mid G_{t+1})P_{\phi}(\theta_{t}\mid G_{t}) = R(G_{t}, \theta_{t})P_{\phi}(G_{t+1}\mid G_{t})P_{\phi}(\theta_{t+1}\mid G_{t+1}),
\end{equation}
again, using the convention $\theta_{T} = \theta$. Taking the product of the ratio between $P_{\phi}$ and $P_{B}$ over the trajectory $\tau$, we get
\begin{align}
    \prod_{t=0}^{T-1}\frac{P_{\phi}(G_{t+1}\mid G_{t})}{P_{B}(G_{t}\mid G_{t+1})} &= \prod_{t=0}^{T-1}\frac{P_{\phi}(\theta_{t}\mid G_{t})R(G_{t+1}, \theta_{t+1})}{R(G_{t}, \theta_{t})P_{\phi}(\theta_{t+1}\mid G_{t+1})}\\
    &= \frac{P_{\phi}(\theta_{0}\mid G_{0})R(G, \theta)}{R(G_{0}, \theta_{0})P_{\phi}(\theta\mid G)}
\end{align}
Moreover, the backward transition probability $P_{B}$, defined only over the transitions of the GFlowNet, induces a distribution over the trajectories from $G_{0}$ to $G$ \citep{bengio2021gflownetfoundations}, meaning that
\begin{equation}
    \sum_{\tau: G_{0}\rightsquigarrow G}\prod_{t=0}^{T-1}P_{B}(G_{t}\mid G_{t+1}) = 1.
\end{equation}
Therefore, we have
\begin{align}
    P_{\phi}(G\mid G_{0})P_{\phi}(\theta\mid G) &= P_{\phi}(\theta\mid G)\left(\sum_{\tau: G_{0}\rightsquigarrow G}\prod_{t=0}^{T-1}P_{\phi}(G_{t+1}\mid G_{t})\right)\\
    &= \frac{P_{\phi}(\theta_{0}\mid G_{0})}{R(G_{0}, \theta_{0})}R(G, \theta)\left(\sum_{\tau:G_{0}\rightsquigarrow G}\prod_{t=0}^{T-1}P_{B}(G_{t}\mid G_{t+1})\right)\\
    &= \frac{P_{\phi}(\theta_{0}\mid G_{0})}{R(G_{0}, \theta_{0})}R(G, \theta)
\end{align}
We saw in \cref{app:proof-diffsubtb2} that $P_{\phi}(\theta_{0}\mid G_{0})/R(G_{0},\theta_{0})$ is independent of the value of $\theta_{0}$ if the SubTB conditions are satisfied for all undirected paths of length 3 (see \cref{eq:subtb3-to-subtb2}). This concludes the proof: $P_{\phi}(G\mid G_{0})P_{\phi}(\theta\mid G) \propto R(G, \theta)$.
\end{proof}

\subsubsection{Mini-batch training}
\label{app:minibatch-training}
\minibatchtraining*

\begin{proof}
    We will first show that $\log \widehat{R}_{\gB}(G, \theta)$ defined in \cref{eq:score-estimate} is an unbiased estimate of the log-reward $\log R(G, \theta)$ under a uniform distribution of the mini-batches $\gB$. We can observe that by conditional independence of the observations $\vx^{(n)}$ given $G$ and $\theta$, we have
    \begin{equation}
        \log P(\gD\mid \theta, G) = \sum_{n=1}^{N}\log P(\vx^{(n)}\mid \theta, G) = N\E_{\vx}\big[\log P(\vx\mid \theta, G)\big],
        \label{eq:proof-likelihood-decomposition}
    \end{equation}
    where the expectation is taken wrt. the uniform distribution over the observations in $\gD$. It is important to note that we can decompose the likelihood term as in \cref{eq:proof-likelihood-decomposition} because the observations are mutually independent given $G$ \emph{and} $\theta$; if we were only conditioning on $G$ (i.e., using the marginal likelihood, as in \citep{deleu2022daggflownet}), then those observations would not be conditionally independent in general. Similarly, we have
    \begin{equation}
        \E_{\gB}\left[\sum_{\vx^{(m)}\in \gB}\log P(\vx^{(m)}\mid \theta, G)\right] = M\E_{\vx}\big[\log P(\vx\mid \theta, G)\big].
    \end{equation}
    Therefore, it shows that the estimate of the log-reward is unbiased:
    \begin{align}
        \E_{\gB}\big[\log \widehat{R}_{\gB}(G, \theta)\big] &= \frac{N}{M}\E_{\gB}\left[\sum_{\vx^{(m)}\in \gB}\log P(\vx^{(m)}\mid \theta, G)\right] + \log P(\theta\mid G) + \log P(G)\\
        &= \log P(\gD\mid \theta, G) + \log P(\theta\mid G) + \log P(G) = \log R(G, \theta).
    \end{align}
    Recall that the estimate of the loss is defined as $\widehat{\gL}_{\gB}(\phi) = \E_{\pi}\big[\widehat{\Delta}_{\gB}^{2}(\phi)\big]$, where the residual is defined by
    \begin{equation}
        \widehat{\Delta}_{\gB}(\phi) = \log \frac{\widehat{R}_{\gB}\big(G, \bot(\theta)\big)P_{B}(G\mid G')P_{\phi}\big(\bot(\theta)\mid G\big)}{\widehat{R}_{\gB}\big(G', \bot(\theta')\big)P_{\phi}(G'\mid G)P_{\phi}\big(\bot(\theta')\mid G'\big)}.
    \end{equation}
    Taking the expectation of this estimated loss wrt. a random mini-batch $\gB$, we get
    \begin{align}
        \E_{\gB}\big[\widehat{\gL}_{\gB}(\phi)\big] &= \E_{\pi}\big[\E_{\gB}\big[\widehat{\Delta}_{\gB}^{2}(\phi)\big]\big]\\
        &\geq \E_{\pi}\big[\E_{\gB}\big[\widehat{\Delta}_{\gB}(\phi)\big]^{2}\big]\label{eq:minibatch-proof-jensen}\\
        &= \E_{\pi}\big[\Delta^{2}(\phi)\big]\label{eq:minibatch-proof-unbiased}\\
        &= \gL(\phi),
    \end{align}
    where we used the convexity of the square function and Jensen's inequality in \cref{eq:minibatch-proof-jensen}, and the unbiasedness of $\log \widehat{R}_{\gB}(G, \theta)$ (as well as $\log \widehat{R}_{\gB}(G', \theta')$) in \cref{eq:minibatch-proof-unbiased}; recall that $\Delta(\phi)$ is given in \cref{eq:surrogate-objective-appendix} (see also \cref{eq:sub-tb-loss-error}).
\end{proof}

\begin{restatable}{proposition}{minibatchtrainingLtwo}
    The mini-batch gradient estimator is unbiased, i.e., $\nabla_{\phi}\gL(\phi) = \E_{\gB}\big[\nabla_{\phi}\widehat{\gL}_{\gB}(\phi)\big]$. Therefore, the local and global minima of the expected mini-batch loss coincide with those of the full-batch loss.
    \label{prop:minibatch-training-L2}
\end{restatable}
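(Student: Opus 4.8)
The plan is to exploit that the mini-batch randomness enters the residual $\widehat{\Delta}_{\gB}(\phi)$ only through the stop-gradient reward terms, so that it contributes to the loss but \emph{not} to its $\phi$-gradient. First I would decompose the residual additively as $\widehat{\Delta}_{\gB}(\phi) = c_{\gB} + b(\phi)$, where $c_{\gB} = \log\widehat{R}_{\gB}\big(G',\bot(\theta')\big) - \log\widehat{R}_{\gB}\big(G,\bot(\theta)\big)$ collects the two reward contributions and $b(\phi) = \log P_{B}(G\mid G') + \log P_{\phi}\big(\bot(\theta)\mid G\big) - \log P_{\phi}(G'\mid G) - \log P_{\phi}\big(\bot(\theta')\mid G'\big)$ collects the policy terms; the full-batch residual has the identical form $\Delta(\phi) = c + b(\phi)$ with $\widehat{R}_{\gB}$ replaced by the exact reward $R$. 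The key observation is that $c_{\gB}$ carries no explicit dependence on $\phi$: the reward $\widehat{R}_{\gB}$ contains no $P_{\phi}$ factor, the graphs $G, G'$ are discrete, and $\theta, \theta'$ are detached by $\bot$, so that treating $\pi$ as a fixed sampling distribution (as in \cref{sec:learning-gflownet}) we have $\nabla_{\phi} c_{\gB} = 0$.

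Next I would invoke the unbiasedness of the log-reward estimate established in the proof of \cref{prop:minibatch-training}, namely $\E_{\gB}\big[\log\widehat{R}_{\gB}(G,\theta)\big] = \log R(G,\theta)$; by linearity this gives $\E_{\gB}[c_{\gB}] = c$. Differentiating the squared residual and using $\nabla_{\phi} c_{\gB} = 0$ yields, for each fixed sample drawn from $\pi$,
\begin{equation*}
    \nabla_{\phi}\widehat{\Delta}_{\gB}^{2}(\phi) = 2\big(c_{\gB} + b(\phi)\big)\nabla_{\phi}b(\phi),
\end{equation*}
and taking the expectation over $\gB$ collapses $c_{\gB}$ to its mean $c$, producing exactly $2\big(c + b(\phi)\big)\nabla_{\phi}b(\phi) = \nabla_{\phi}\Delta^{2}(\phi)$. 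Taking $\E_{\pi}$ of both sides and interchanging $\E_{\pi}$, $\E_{\gB}$, and $\nabla_{\phi}$ (Fubini together with a standard dominated-convergence argument to differentiate under the integral) gives $\E_{\gB}\big[\nabla_{\phi}\widehat{\gL}_{\gB}(\phi)\big] = \nabla_{\phi}\gL(\phi)$, the claimed unbiasedness.

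For the statement about minima, I would note that since this identity holds at every $\phi$, the difference $\E_{\gB}\big[\widehat{\gL}_{\gB}(\phi)\big] - \gL(\phi)$ has vanishing gradient everywhere and is therefore constant on the (connected) parameter space; concretely it equals the $\phi$-independent Jensen gap $\E_{\pi}\big[\Var_{\gB}(c_{\gB})\big]$ already isolated in the proof of \cref{prop:minibatch-training}. Two functions differing by a constant share the same local and global minimizers, which gives the final claim.

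The main obstacle is pinning down precisely that the mini-batch noise is confined to a $\phi$-independent additive term of the residual --- this is exactly what separates the $L^{2}$ case from the general one. It hinges on the stop-gradient treatment of $\theta, \theta'$ from \cref{eq:surrogate-objective-appendix} and on treating $\pi$ as a fixed sampling distribution; were the reward allowed to depend on $\phi$, or were one to differentiate through the on-policy samples inside the reward, the squared fluctuation $c_{\gB}^{2}$ would no longer drop out of the gradient and unbiasedness would fail. The remaining interchange-of-limits steps are routine.
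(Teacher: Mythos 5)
Your proof is correct and takes essentially the same route as the paper's: your additive decomposition $\widehat{\Delta}_{\gB}(\phi)=c_{\gB}+b(\phi)$ with $\nabla_{\phi}c_{\gB}=0$ is exactly the paper's key observation that $\nabla_{\phi}\widehat{\Delta}_{\gB}(\phi)=\nabla_{\phi}\Delta(\phi)$, after which both arguments use the unbiasedness of the log-reward estimate to collapse $\E_{\gB}\big[\widehat{\Delta}_{\gB}(\phi)\big]$ to $\Delta(\phi)$ and obtain $\E_{\gB}\big[\nabla_{\phi}\widehat{\gL}_{\gB}(\phi)\big]=\nabla_{\phi}\gL(\phi)$. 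Your identification of the constant gap as $\E_{\pi}\big[\Var_{\gB}(c_{\gB})\big]$ (stated slightly more carefully than the paper, which omits the outer $\E_{\pi}$) matches the variance term given at the end of the paper's proof, and your vanishing-gradient argument for the coincidence of minima is a harmless variant of the paper's direct constant-difference statement.
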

\begin{proof}
    We now show that the gradient estimator is unbiased. 
    We observe that 
    \begin{equation}
    \nabla_\phi\Delta(\phi)=\nabla_\phi\widehat{\Delta}_{\gB}(\phi) \label{eq:delta_gradient_batch_independent}
    \end{equation}
    since only the terms corresponding to the rewards  differ between $\Delta(\phi)$ and $\widehat{\Delta}_{\gB}(\phi)$, and they do not depend on $\phi$. Therefore,
    \begin{align*}
    \E_{\gB}\big[\nabla_\phi\widehat{\gL}_{\gB}(\phi)\big]
    &=\E_{\gB}\big[\nabla_\phi[\widehat{\Delta}_{\gB}(\phi)^2]\big]\\
    &=2\cdot\E_{\gB}\big[\widehat{\Delta}_{\gB}(\phi)\nabla_\phi\widehat{\Delta}_{\gB}(\phi)\big]\\
    &=2\cdot\E_{\gB}\big[\widehat{\Delta}_{\gB}(\phi)\big]\nabla_\phi\Delta(\phi)\\
    &=2\Delta(\phi)\nabla_\phi\Delta(\phi)\\
    &=\nabla_\phi\big[\Delta(\phi)^2]\\
    &=\nabla_\phi\gL(\phi),
    \end{align*}
    as desired. This implies the expected mini-batch loss and full-batch loss differ by a constant and have the same set of local and global minima. This constant happens to equal
    \begin{equation*}
    \mathrm{Var}_{\gB}\left[\log\frac{\widehat{R}_{\gB}(G, \theta)}{\widehat{R}_{\gB}(G', \theta')}\right],
    \end{equation*}
    and showing the difference equals this constant yields an alternative proof.
\end{proof}

\section{Additional experiments \& experimental details}
\label{app:experimental-details}

In addition to Bayesian structure learning methods based on variational inference \citep{lorch2021dibs,cundy2021bcdnets} or GFlowNets \citep{nishikawa2022vbg}, we also consider 2 baseline methods based on MCMC, and 2 methods based on bootstrapping \citep{friedman1999bootstrap}, as introduced in \citep{lorch2021dibs}. Metropolis-Hastings MC\textsuperscript{3} (MH-MC\textsuperscript{3}) samples both graphical structures and parameters jointly at each move, whereas Metropolis-within-Gibbs MC\textsuperscript{3} (Gibbs-MC\textsuperscript{3}) alternates between updates of the structure, and updates of the parameters; note that MC\textsuperscript{3} here refers to the Structure MCMC algorithm \citep{madigan1995structuremcmc}. In terms of bootstrapping methods, we consider a variant (called B-GES\textsuperscript{*}) based on GES \citep{chickering2002ges}, and another (called B-PC\textsuperscript{*}) based on PC \citep{spirtes2000pc}. However, since this would yield an approximation of the marginal posterior $P(G\mid \gD)$ only, the parameter sample $\theta$ corresponding to a DAG $G$ correspond to the parameter inferred by $P(\theta\mid G, \gD)$ (see \cref{app:posterior-linear-gaussian-proof}).

\begin{algorithm}[t]
    \caption{JSP-GFN training procedure}
    \label{alg:gflownet-training}
    \begin{algorithmic}[1]
        \State Initialize the trajectory at $G_{0}$ the empty graph
        \Repeat
            \LComment{Interactions with the environment}
            \For{$K$ steps}
                \State Sample whether the trajectory continues or not: $a \sim P_{\phi}(\mathrm{stop}\mid G_{t})$
                \If{$a$ is the ``$\mathrm{stop}$'' action}
                    \State Reset the trajectory: $G_{t+1} = G_{0}$ is the empty graph
                \Else
                    \State Sample a new graph $G_{t+1} \sim P_{\phi}(G_{t+1}\mid G_{t}, \neg\mathrm{stop})$
                    \State Store the transition $G_{t} \rightarrow G_{t+1}$ in the replay buffer
                \EndIf
            \EndFor
            \State \vphantom{A blank line}
            \LComment{Update of the parameters $\phi$}
            \State Sample a mini-batch $\gB$ of transitions $G \rightarrow G'$ from the replay buffer
            \State Sample $\theta \sim P_{\phi}(\theta\mid G, \mathrm{stop})$ (and similarly for $\theta'$) for each transition in $\gB$
            \State Evaluate the rewards $R(G, \theta)$ and $R(G', \theta')$\Comment{\cref{eq:joint-reward}}
            \State Evaluate the loss $\gL(\phi)$ based on $\gB$\Comment{\cref{eq:sub-tb-loss-error}}
            \State $\phi \leftarrow \phi - \alpha \nabla_{\phi}\gL(\phi)$\Comment{Update the parameters with stochastic gradient methods}
        \Until{convergence criterion}
    \end{algorithmic}
\end{algorithm}

\subsection{Sampling distribution}
\label{app:sampling-distribution}
In \cref{sec:learning-gflownet}, we saw that the learning objective of JSP-GFN can be written as $\gL(\phi) = \E_{\pi}[\Delta^{2}(\phi)]$, where $\pi$ is a sampling distribution over $(G, \theta)$ and $(G', \theta')$ with full support. We use a combination of on-policy ($\pi=P_{\phi}$) and off-policy ($\pi$ is different from $P_{\phi}$) in order to train the GFlowNet: taking inspiration from \citep{deleu2022daggflownet}, transitions $G \rightarrow G'$ are sampled off-policy from a replay buffer, whereas their corresponding parameters $\theta$ and $\theta'$ are sampled on-policy using our current $P_{\phi}(\theta\mid G)$. Therefore, a key difference with \citet{deleu2022daggflownet} is that the reward $R(G, \theta)$ in \cref{eq:joint-reward} is calculated ``lazily'' when the loss is evaluated (i.e., only once $\theta$ and $\theta'$ are known), as opposed to being computed during the interaction with the state space and stored in the replay buffer alongside the transitions. \cref{alg:gflownet-training} gives a description of the training algorithm of JSP-GFN.

\subsection{Joint posterior over small graphs}
\label{app:posterior-small-graphs}

\subsubsection{Data generation \& modeling}
\label{app:data-generation-small-graphs}
\paragraph{Data generation.} We follow the same data generation process as in \citep{deleu2022daggflownet,lorch2021dibs}. More precisely, we first sample a graph from an Erd\"{o}s-R\'{e}nyi model \citep{erdos1960ergraphs} over $d=5$ nodes, with $d$ edges on average (a setting typically referred to as ER1). Once the structure of the graph $G^{\star}$ is known, we sample the parameters $\theta^{\star}$ of the linear Gaussian model randomly from a standard Normal distribution $\gN(0, 1)$. The linear Gaussian model is defined as
\begin{equation}
    X_{i} = \sum_{X_{j}\in\mathrm{Pa}_{G}(X_{i})}\theta_{ij}^{\star}X_{j} + \varepsilon_{i},
\end{equation}
where $\theta_{ij}^{\star} \sim \gN(0, 1)$, and $\varepsilon_{i} \sim \gN(0, 0.01)$; this defines all the CPDs necessary for \cref{eq:bayesian-network}. Finally, we use ancestral sampling to generate $N=100$ observations to create the dataset $\gD$.

\paragraph{Modeling.} We use a linear Gaussian Bayesian Network to model the data, where the CPD for the variable $X_{i}$ can be written as $P\big(X_{i}\mid \mathrm{Pa}_{G}(X_{i}); \theta_{i}\big) = \gN(\mu_{i}, \sigma^{2})$, where
\begin{equation}
    \mu_{i} = \sum_{j=1}^{d}\mathbbm{1}\big(X_{j}\in\mathrm{Pa}_{G}(X_{i})\big)\theta_{ij}X_{j},
\end{equation}
and where $\sigma^{2} = 0.01$ is a fixed variance across variables, matching the variance used for data generation. We place a unit Normal prior over the parameters $\theta_{ij}$ of the model: $P(\theta_{ij}\mid G) = \gN(0, 1)$. This model differs from the widely used BGe score \citep{geiger1994bge} in that $\sigma^{2}$ is treated as a hyperparameter here, instead of a parameter of the model, and therefore the resulting log-reward is not \emph{score equivalent} (i.e., placing the same reward for Markov equivalent DAGs; \citealp{koller2009pgm}). We used a uniform prior over graphs. Under this model, the posterior distribution $P(\theta\mid G, \gD)$ can be computed analytically, and the proof is available in \cref{app:posterior-linear-gaussian-proof}.

\subsubsection{Comparing JSP-GFN with the exact posterior against features}
\label{app:comparison-jsp-gfn-other-features}
\begin{figure}[t]
\centering
\includegraphics[width=0.9\linewidth]{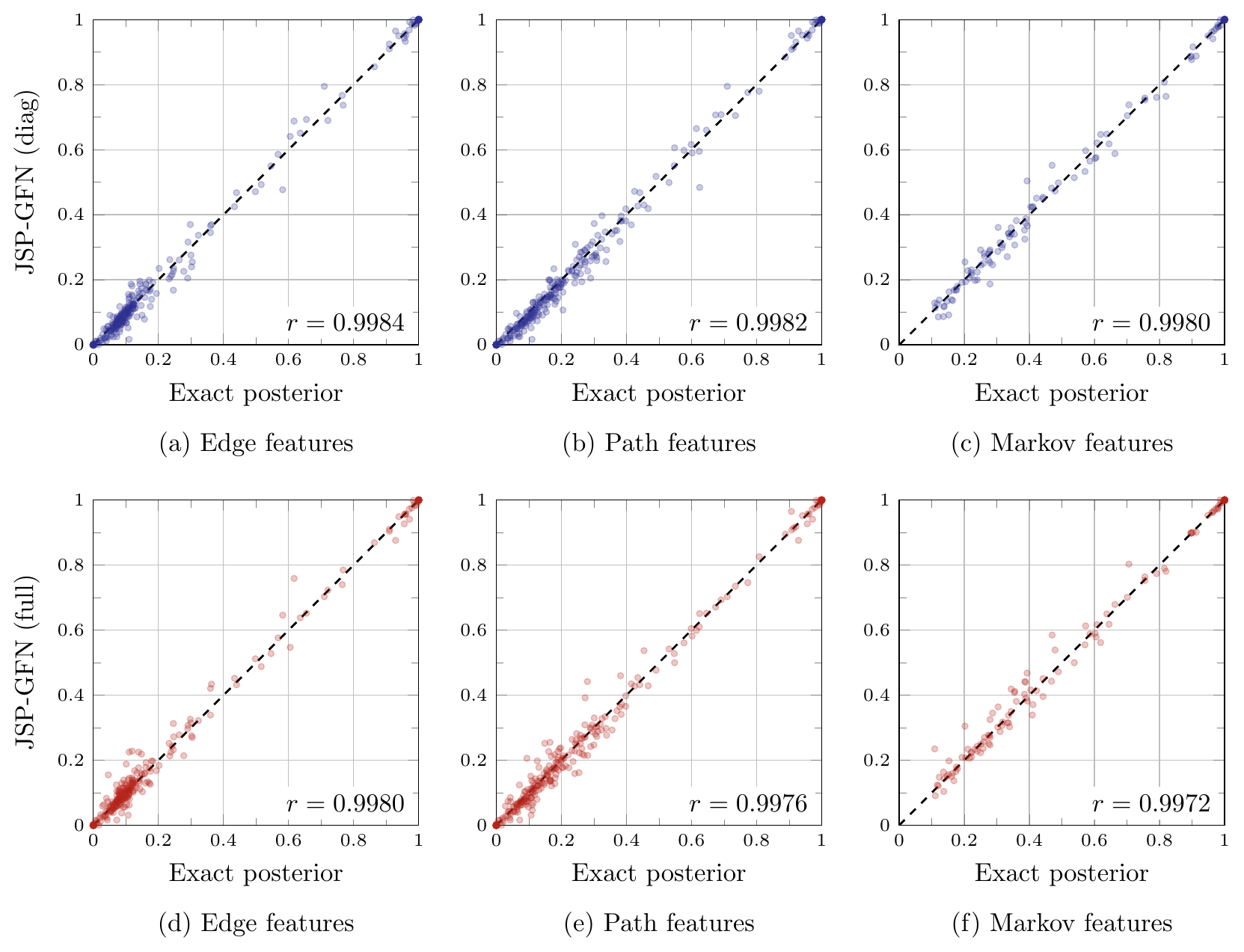}
\caption{Comparison of the marginals over graphs in terms of features computed with the exact posterior (x-axis) and the approximation given by JSP-GFN (y-axis). (a-c) Comparison with JSP-GFN (diag); (d-f) comparison with JSP-GFN (full). Each point corresponds to a pair of variables $(X_{i}, X_{j})$ for each of the 20 datasets.}
\label{fig:small-graphs-all}
\end{figure}
In \cref{sec:comparison-exact-joint-posterior}, we evaluated the accuracy of the posterior approximation returned by JSP-GFN by comparing them on the edge features, i.e., the marginal distribution of an edge $X_{i} \rightarrow X_{j}$ is present in the graph:
\begin{equation}
    P(X_{i}\rightarrow X_{j}\mid \gD) = \E_{P(G\mid \gD)}\big[\mathbbm{1}(X_{i}\rightarrow X_{j}\in G)\big],
    \label{eq:edge-feature}
\end{equation}
where the expectation in \cref{eq:edge-feature} is either over the true (marginal) posterior $P(G\mid \gD)$ (for the x-axis of \cref{fig:small-graphs-scatter-table}), or over the distribution $P_{\phi}^{\top}$ induced by the GFlowNet (discarding $\theta$, see \cref{sec:forward-transition-probabilities}, for the y-axis). Besides edge features, there exists other marginals of interest \citep{friedman2003ordermcmc,deleu2022daggflownet}, such as the \emph{path feature} and the \emph{Markov feature}. The path feature corresponds to the marginal probability of a path $X_{i} \rightsquigarrow X_{j}$ being present in the graph, and the Markov feature is the marginal probability of a node $X_{i}$ being in the Markov blanket of $X_{j}$. In other words
\begin{align}
    P(X_{i}\rightsquigarrow X_{j} \mid \gD) &= \E_{P(G\mid \gD)}\big[\mathbbm{1}(X_{i}\rightsquigarrow X_{j} \in G)\big]\label{eq:app-additional-features-paths}\\
    P(X_{i}\in \mathrm{MB}(X_{j})\mid \gD) &= \E_{P(G\mid \gD)}\big[\mathbbm{1}(X_{i}\in \mathrm{MB}_{G}(X_{j}))\big],
    \label{eq:app-additional-features-markov}
\end{align}
where $\mathrm{MB}_{G}(X_{j})$ denotes the Markov blanket of $X_{j}$ in $G$. For the posterior approximation returned by JSP-GFN (and other methods), the expectations appearing in \cref{eq:app-additional-features-paths} \& \cref{eq:app-additional-features-markov} are computed under the posterior approximation, and can be estimated using a Monte Carlo estimate over sample DAGs from the model.

In this experiment, we considered two variants of JSP-GFN. The first one, called \emph{JSP-GFN (diag)}, where $P_{\phi}(\theta_{i}\mid G, \mathrm{stop})$ in \cref{eq:forward-transition-proba-stop} is parametrized as a Normal distribution with a diagonal covariance matrix; this adds a modeling bias since here the exact posterior $P(\theta_{i}\mid G, \gD)$ is a Normal distribution with full covariance (see \cref{app:posterior-linear-gaussian-proof}). To control for this bias, the second model called \emph{JSP-GFN (full)} assumes that \cref{eq:forward-transition-proba-stop} has a full covariance matrix, as in VBG \citep{nishikawa2022vbg}.

In \cref{fig:small-graphs-all}, we show a similar plot as in \cref{fig:small-graphs-scatter-table} (a) for all these features, for both models JSP-GFN (diag) and JSP-GFN (full). We observe that for all features, the approximation of the posterior given by JSP-GFN is very accurate (as confirmed by the Pearson's correlation coefficients). Interestingly, the more expressive model JSP-GFN (full) seems to perform slightly worse than JSP-GFN (diag); this is also confirmed in part by the quantitative measures in \cref{tab:small-graphs-all-features}. This can be explained by the additional number of parameters of the neural network $\phi$ necessary to output the higher dimensional full covariance matrix (more precisely, a lower-triangular matrix corresponding to its Cholesky decomposition). In \cref{tab:small-graphs-all-features}, we show a quantitative comparison across the different Bayesian structure learning methods on the three features, in terms of RMSE and Pearson's correlation coefficient. Similar to \cref{sec:comparison-exact-joint-posterior}, we observe that JSP-GFN provides a more accurate posterior approximation (at least in terms of its marginal over $G$) than other methods.

\begin{table}[t]
    \centering
    \caption{Quantitative comparison between different Bayesian structure learning algorithms and the exact posterior on small graphs with $d=5$ nodes. For each feature, the root meas-square error (RMSE) and Pearson's correlation coefficient between the features computed with the posterior approximation and the exact posterior are reported. Values are reported as the mean and $95\%$ confidence interval across 20 different datasets.}
    \begin{adjustbox}{center,scale=0.9}
    \begin{tabular}{lcccccc}
        \toprule
         & \multicolumn{2}{c}{Edge features} & \multicolumn{2}{c}{Path features} & \multicolumn{2}{c}{Markov features} \\
        \cmidrule(lr){2-3}\cmidrule(lr){4-5}\cmidrule(lr){6-7}
         & RMSE & Pearson's $r$ & RMSE & Pearson's $r$ & RMSE & Pearson's $r$ \\
        \midrule
        MH-MC\textsuperscript{3} & $0.357 \pm 0.022$ & $0.067 \pm 0.143$ & $0.368 \pm 0.027$ & $0.045 \pm 0.179$ & $0.341 \pm 0.017$ & $0.064 \pm 0.217$ \\
        Gibbs-MC\textsuperscript{3} & $0.357 \pm 0.022$ & $0.028 \pm 0.127$ & $0.367 \pm 0.026$ & $0.150 \pm 0.162$ & $0.341 \pm 0.018$ & $0.062 \pm 0.159$ \\
        B-GES\textsuperscript{*} & $0.263 \pm 0.070$ & $0.635 \pm 0.180$ & $0.302 \pm 0.080$ & $0.544 \pm 0.230$ & $0.129 \pm 0.022$ & $	0.955 \pm 0.026$ \\
        B-PC\textsuperscript{*} & $	0.305 \pm 0.057$ & $0.570 \pm 0.138$ & $0.349 \pm 0.058$ & $0.471 \pm 0.154$ & $0.354 \pm 0.072$ & $0.821 \pm 0.087$ \\
        DiBS & $0.312 \pm 0.038$ & $0.737 \pm 0.071$ & $0.357 \pm 0.041$ & $0.710 \pm 0.079$ & $0.504 \pm 0.052$ & $0.643 \pm 0.093$ \\
        BCD Nets & $0.215 \pm 0.055$ & $0.819 \pm 0.097$ & $0.266 \pm 0.057$ & $0.774 \pm 0.109$ & $0.327 \pm 0.040$ & $0.850 \pm 0.067$ \\
        VBG & $0.237 \pm 0.037$ & $0.816 \pm 0.064$ & $0.284 \pm 0.027$ & $0.799 \pm 0.050$ & $0.434 \pm 0.058$ & $0.738 \pm 0.091$ \\
        \midrule
        JSP-GFN (diag) & $\mathbf{0.018 \pm 0.005}$ & $\mathbf{0.998 \pm 0.001}$ & $\mathbf{0.022 \pm 0.005}$ & $\mathbf{0.998 \pm 0.001}$ & $\mathbf{0.019 \pm 0.006}$ & $\mathbf{0.999 \pm 0.001}$ \\
        JSP-GFN (full) & $\mathbf{0.019 \pm 0.007}$ & $\mathbf{0.998 \pm 0.001}$ & $\mathbf{0.021 \pm 0.007}$ & $\mathbf{0.998 \pm 0.002}$ & $\mathbf{0.020 \pm 0.008}$ & $\mathbf{0.999 \pm 0.001}$ \\
        \bottomrule
    \end{tabular}
    \end{adjustbox}
    \label{tab:small-graphs-all-features}
\end{table}

\subsubsection{Evaluation of the posterior approximations over parameters}
\label{app:evaluation-posterior-parameters}
In order to evaluate the quality of the posterior approximation over $\theta$, we measure how likely sample parameters from the approximation are under the exact posterior distribution $P(\theta\mid G, \gD)$. More precisely, we compute the cross-entropy between the posterior approximation $P_{\phi}(G, \theta)$ and the exact joint posterior $P(G, \theta\mid \gD)$: given a distribution $P_{\phi}(G, \theta)$ approximating the joint posterior $P(G, \theta\mid \gD)$, we estimate
\begin{equation}
    -\E_{P_{\phi}(G, \theta)}\big[\log P(\theta\mid G, \gD)\big] \approx -\frac{1}{K}\sum_{k=1}^{K}\log P(\theta^{(k)}\mid G^{(k)}, \gD),
\end{equation}
where $\{(G^{(k)}, \theta^{(k)})\}_{k=1}^{K}$ are $K$ samples of the posterior approximation $P_{\phi}(G, \theta)$. We use this measure as it can be estimated from samples.

\subsection{Gaussian Bayesian Networks from simulated data}
\label{app:joint-posterior-larger-graphs}

\subsubsection{Data generation \& modeling}
\label{sec:data-generation-larger-graphs}
\paragraph{Data generation.} For the linear Gaussian experiment, the data generation process follows the process described for small graphs in \cref{app:data-generation-small-graphs}, except that we sample ground truth graphs $G^{\star}$ from an Erd\"{o}s-R\'{e}nyi model with $2d$ edges on average (a setting commonly referred to as ER2), for $d=20$ variables.

For the non-linear Gaussian experiment, the data generation process is also similar, except that the CPDs are parametrized using a 2-layer MLP with 5 hidden units and a ReLU activation function \citep{lorch2021dibs} with randomly generated weights. We also sample $N = 100$ observations to create the dataset $\gD$.

\paragraph{Modeling.} We use a Gaussian Bayesian Network to model the data, where the CPD for the variable $X_{i}$ can be written as $P(X_{i}\mid \mathrm{Pa}_{G}(X_{i});\theta_{i}) = \gN(\mu_{i}, \sigma^{2})$, where
\begin{equation}
    \mu_{i} = \mathrm{MLP}(\mM_{i} \mX; \theta_{i}),
\end{equation}
where $\mM_{i} = \mathrm{diag}\big(\mathbbm{1}(X_{1}\in \mathrm{Pa}_{G}(X_{i})), \ldots, \mathbbm{1}(X_{d}\in\mathrm{Pa}_{G}(X_{i}))\big)$ and $\mX = (X_{1}, \ldots, X_{d})$. The variance $\sigma^{2} = 0.01$ is fixed across variables, and matches the variance used for data generation. Following \citet{lorch2021dibs} and matching the data generation process, we use a 2-layer MLP with 5 hidden units and a ReLU activation function, for a total of $|\theta| = 2,220$ parameters. The priors over parameters $P(\theta\mid G)$ and over graphs $P(G)$ follow the ones described in \cref{app:data-generation-small-graphs}.

\subsubsection{Estimation of the log-terminating state probability}
\label{app:estimation-log-prob-terminating}
When the graphs are larger, it becomes impossible to compare the posterior approximation returned by JSP-GFN with the exact joint posterior $P(G, \theta\mid \gD)$ directly, since the latter becomes intractable (even with a linear Gaussian model) due to the super-exponential size of the sample space. Alternatively, since the terminating state probability $P_{\phi}^{\top}(G, \theta)$ of JSP-GFN should ideally be equal to the joint posterior (see \cref{thm:sub-tb-gfn-sampler}), we have
\begin{equation}
    \log P_{\phi}^{\top}(G, \theta) \approx \log P(G, \theta\mid \gD) = \log R(G, \theta) - \log P(\gD),
    \label{eq:eval-linear-fit-log-prob-terminating}
\end{equation}
where $\log P(\gD)$ is a constant corresponding to the log-partition function. Therefore, we can compare the log-terminating state probability $\log P_{\phi}^{\top}(G, \theta)$ with the log-reward $\log R(G, \theta)$ (which we can compute analytically) for different samples $(G, \theta)$, and find a linear relation. This evaluation strategy was introduced in \citep{bengio2021gflownet}.

However, recall from \cref{thm:sub-tb-gfn-sampler} that the terminating state probability is defined as
\begin{equation}
    P_{\phi}^{\top}(G, \theta) = P_{\phi}(G\mid G_{0})P_{\phi}(\theta\mid G) = P_{\phi}(\theta\mid G)\sum_{\tau: G_{0}\rightsquigarrow G}\prod_{t=0}^{T-1}P_{\phi}(G_{t+1}\mid G_{t}),
\end{equation}
where the summation is over all the possible trajectories $\tau = (G_{0}, G_{1}, \ldots, G_{T})$ from $G_{0}$ to $G_{T} = G$. If $G$ is a DAG with $K$ edges, then there are $K!$ such trajectories (i.e., the $K$ edges could be added in any order), meaning that this sum is also intractable. We can leverage the fact that the backward transition probability $P_{B}(G_{t}\mid G_{t+1})$ induces a distribution over the trajectories $G_{0}\rightsquigarrow G$ \citep{bengio2021gflownetfoundations} to write $P_{\phi}(G\mid G_{0})$ as
{\allowdisplaybreaks
\begin{align}
    P_{\phi}(G\mid G_{0}) &= \sum_{\tau: G_{0}\rightsquigarrow G}P_{\phi}(\tau)\label{eq:p_G0_G-intractable-sum}\\
    &= \sum_{\tau: G_{0}\rightsquigarrow G}\frac{P_{B}(\tau)}{P_{B}(\tau)}P_{\phi}(\tau)\\
    &= K!\cdot \E_{\tau\sim P_{B}}\big[P_{\phi}(\tau)\big],
\end{align}}%
where $P_{B}(\tau) = 1/K!$, since the backward transition probability here is fixed to be uniform over parent states. This suggests a way to get an unbiased estimate of $P_{\phi}(G\mid G_{0})$, hence of $P_{\phi}^{\tau}(G, \theta)$, based on Monte-Carlo estimation:
\begin{equation}
    P_{\phi}(G\mid G_{0}) \approx \frac{K!}{M}\sum_{m=1}^{M}P_{\phi}(\tau^{(m)}),
    \label{eq:monte-carlo-estimate-terminating}
\end{equation}
where $\{\tau^{(m)}\}_{m=1}^{M}$ are trajectories from $G_{0}$ to $G$, sampled by removing one edge at time uniformly at random, starting at $G$ (i.e., following the backward transition probabilities $P_{B}$).

While \cref{eq:monte-carlo-estimate-terminating} provides an unbiased estimate of $P_{\phi}(G\mid G_{0})$, in practice the variance of this estimate will be large due to the combinatorially large space of trajectories, and therefore due to the wide range of values $P_{\phi}(\tau)$ may take. In order to reduce the variance, we can first identify some trajectories that would contribute the most to the sum in \cref{eq:p_G0_G-intractable-sum}, and complement them with some randomly sampled trajectories as in \cref{eq:monte-carlo-estimate-terminating}. In other words, if we have access to a subset $\gT_{\mathrm{top}}$ of $B$ trajectories $\tau: G_{0}\rightsquigarrow G$ that have a large $P_{\phi}(\tau)$, then
\begin{equation}
    P_{\phi}(G\mid G_{0}) = \sum_{\tau\in\gT_{\mathrm{top}}}P_{\phi}(\tau) + \sum_{\tau \notin \gT_{\mathrm{top}}}P_{\phi}(\tau) \approx \sum_{\tau \in \gT_{\mathrm{top}}}P_{\phi}(\tau) + \frac{K! - B}{M}\sum_{m=1}^{M}P_{\phi}(\tau^{(m)}),
    \label{eq:beam-search-estimate-terminating}
\end{equation}
where the sample trajectories $\{\tau^{(m)}\}_{m=1}^{M}$ can be obtained using rejection sampling, with a uniform proposal as above. The estimate in \cref{eq:beam-search-estimate-terminating} is still unbiased, but with a lower variance. We can use beam-search to find the ``top-scoring'' trajectories in $\gT_{\mathrm{top}}$, with a beam-size $B$. More precisely, we need to run beam-search, starting at $G_{0}$, in such a way that the trajectories are guaranteed to end at $G$. We can achieve this by constraining the set of actions one can take at each step of expansion to move from a graph $G_{t}$ to $G_{t+1} = G_{t} \cup \{e\}$ (using this notation to denote that $G_{t+1}$ is the result of adding the edge $e$ to $G_{t}$), with the following score:
\begin{equation}
    \widetilde{P}_{\phi}(G_{t+1}\mid G_{t}) = \mathbbm{1}(e \in G)P_{\phi}(G_{t+1}\mid G_{t}).
\end{equation}
In other words, we only keep transitions corresponding to adding edges that are in $G$. Note that even though $\widetilde{P}_{\phi}$ is not a properly defined probability distribution (it does not sum to 1), we can still use this scoring function to run beam-search in order to find ``top-scoring'' trajectories.

\subsubsection{Additional comparisons with the ground-truth graphs}
\label{app:comparison-ground-truth-larger-graphs}
\begin{figure}[t]
\centering
\includegraphics[width=0.7\linewidth]{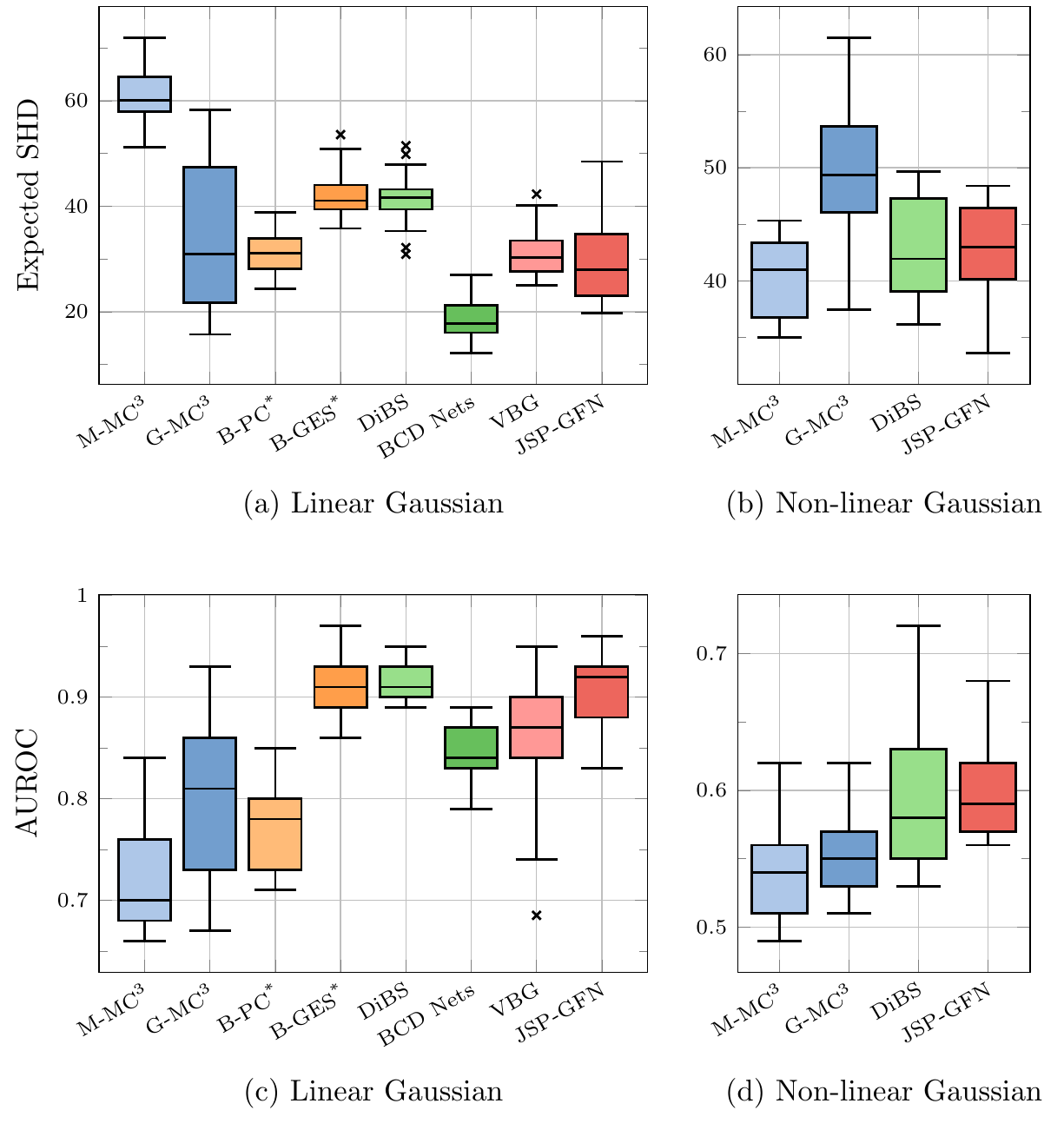}
\caption{(a-b) Comparison of JSP-GFN with other Bayesian structure learning methods in terms of the expected-SHD to the ground truth graphs $G^{\star}$ used for data generation. (c-d) Comparison in terms of Area Under the ROC curve (AUROC) to the ground truth graphs $G^{\star}$.}
\label{fig:20vars-metrics}
\end{figure}
In \cref{sec:larger-graphs}, we compare JSP-GFN against other Bayesian structure learning in terms of their negative log-likelihood on held-out data. In addition to the negative log-likelihood though, there exists standard metrics in the structure learning literature that compare the posterior approximation with the ground truth graphs $G^{\star}$ used for data generation. For example, the \emph{expected SHD}, that is estimated from graphs $\{G_{1}, \ldots, G_{k}\}$ sampled from the posterior approximation as
\begin{equation}
    \E\mathrm{-SHD} \approx \frac{1}{n}\sum_{k=1}^{n}\mathrm{SHD}(G_{k}, G^{\star}),
\end{equation}
where $\mathrm{SHD}(G, G^{\star})$ counts the number of edges changes (adding, removing, reversing an edge) necessary to move from $G$ to $G^{\star}$. There is also the \emph{area under the ROC curve} (AUROC) that compares the edge marginals estimated from the posterior approximation (i.e., the edge features, see \cref{app:comparison-jsp-gfn-other-features}) and the target $G^{\star}$. We report these metrics in \cref{fig:20vars-metrics}.

Although these metrics are used in the Bayesian structure learning literature, they also suffer from a number of drawbacks \citep{lorch2022avici}. Namely, these metrics do not properly assess the quality of the posterior approximation (i.e., how close the approximation is the the true $P(G, \theta\mid \gD)$), but merely how close the sampled graphs are from $G^{\star}$. In general, and especially when the data is limited, the graphs sampled from the true posterior have no reason a priori to match exactly $G^{\star}$. Moreover, the expected SHD tends to favour overly sparse graphs on the one hand, or posterior approximations that collapse completely at $G^{\star}$ on the other hand, both situations indicating a poor approximation of the true $P(G, \theta\mid \gD)$.

\subsection{Learning biological structures from real data}
\label{app:real-data}

\subsubsection{Modeling}
\label{app:real-data-modeling}
\paragraph{Protein signaling networks from flow cytometry data.} Since the flow cytometry data has been discretized, we use a non-linear model with Categorical observations. The CPDs are parametrized using a 2-layer MLP with 16 hidden units and a ReLU activation function, i.e., $X_{i} \mid \mathrm{Pa}_{G}(X_{i}) \sim \mathrm{Categorical}(\pi_{i})$, where
\begin{equation}
    \pi_{i} = \mathrm{MLP}\big(\mM_{i} \mX; \theta_{i}),
    \label{eq:real-data-mlp}
\end{equation}
where $\mX$ encodes the discrete inputs as one-hot encoded values, and the MLP has a softmax activation function for the output layer. In total, the model has $|\theta| = 6,545$ parameters. The priors over parameters $P(\theta\mid G)$ and over graphs $P(G)$ follow the ones described in \cref{app:data-generation-small-graphs}.

\paragraph{Gene regulatory networks from gene expression data.} Gene expression data is composed of either non-zero continuous data (when a gene is expressed) or (exactly) zero values (when the gene is inhibited). To capture this type of observations, we model CPDs of the Bayesian Network as zero-inflated Normal distributions:
\begin{equation}
    P(X_{i}\mid \mathrm{Pa}_{G}(X_{i}); \theta_{i}) = \alpha_{i} \delta_{0}(X_{i}) + (1 - \alpha_{i})\gN(\mu_{i}, \sigma_{i}^{2})
\end{equation}
where $\mu_{i}$ is the result of a 2-layer MLP with 16 hidden units, as in \cref{eq:real-data-mlp}. The parameters of the CPDs contain the parameters of the MLP, as well as the mixture parameter $\alpha_{i}$ and the variance of the observation noise $\sigma_{i}^{2}$, for a total of $|\theta| = 61,671$ parameters. The priors over parameters $P(\theta\mid G)$ and over graphs $P(G)$ follow the ones described in \cref{app:data-generation-small-graphs}.

\subsection{Proofs}
\label{app:experimental-details-proofs}

\subsubsection{Posterior of the linear Gaussian model}
\label{app:posterior-linear-gaussian-proof}
Recall that the CPD for the linear Gaussian model can be written as
\begin{align}
    P\big(X_{i}\mid \mathrm{Pa}_{G}(X_{i}); \theta_{i}\big) &= \gN(\mu_{i}, \sigma^{2}) && \mathrm{where} & \mu_{i} &= \sum_{j=1}^{d}\mathbbm{1}\big(X_{j}\in \mathrm{Pa}_{G}(X_{i})\big)\theta_{ij}X_{j},
    \label{eq:app-proof-lingauss}
\end{align}
and where $\sigma^{2}$ is a fixed hyperparameter. Moreover, we assume that the parameters have a unit Normal prior associated to them, meaning that
\begin{equation}
    P(\theta_{ij}\mid G) = \left\{\begin{array}{ll}
        \gN(\mu_{0}, \sigma_{0}^{2}) & \textrm{if $X_{j} \rightarrow X_{i} \in G$}\\
        \delta_{0} & \textrm{otherwise,}
    \end{array}\right.
\end{equation}
where $\mu_{0} = 0$, $\sigma_{0}^{2} = 1$, and $\delta_{0}$ is the Dirac measure at $0$, indicating that this parameter is always inactive.

We want to compute the posterior distribution $P(\theta_{i} \mid G, \gD)$; this is sufficient, since we know that the parameters of the different CPDs are mutually conditionally independent given $G$ and $\gD$. Let $X \in \sR^{N\times d}$ be the design matrix of the dataset $\gD$ (i.e., the observations $\vx^{(n)}$ concatenated row-wise), and by abuse of notation, we denote by $X_{i}$ the $i$th column of this design matrix. Let $D_{i}$ be a diagonal matrix, dependent on $G$, defined as
\begin{equation}
    D_{i} = \mathrm{diag}\big(\mathbbm{1}\big(X_{1}\in\mathrm{Pa}_{G}(X_{i})\big), \ldots, \mathbbm{1}\big(X_{d}\in\mathrm{Pa}_{G}(X_{i})\big)\big).
\end{equation}
We can rewrite the complete model above as
\begin{align}
    P(\theta_{i}\mid G) &= \gN(D_{i}\mu_{0}, \sigma_{0}^{2}D_{i})\\
    P\big(X_{i}\mid \mathrm{Pa}_{G}(X_{i});\theta_{i}\big) &= \gN(XD_{i}\theta_{i}, \sigma^{2}I_{N}).
\end{align}
We abuse the notation above by treating a Dirac distribution at $0$ as the limiting case of a Normal distribution with variance $0$. Given this form, we can easily identify that the posterior over $\theta_{i}$ is a Normal distribution
\begin{align}
    P(\theta_{i}\mid G, \gD) &= \gN(\bar{\mu}_{i}, \bar{\Sigma}_{i}) && \mathrm{where} & \begin{aligned}
        \bar{\mu}_{i} &= \bar{\Sigma}_{i}\left[\frac{1}{\sigma^{2}_{0}}D_{i}\mu_{0} + \frac{1}{\sigma^{2}}D_{i}X^{\top}X_{i}\right]\\
        \bar{\Sigma}_{i}^{-1} &= \frac{1}{\sigma_{0}^{2}}D_{i}^{-1} + \frac{1}{\sigma^{2}} D_{i}X^{\top}XD_{i}
    \end{aligned}
\end{align}
where we used the conventions $1 / 0 = \infty$ and $0 \times \infty = 0$. The masked entries of $D_{i}$ will correspond to zeroed-out entries in $\bar{\mu}_{i}$, and to rows and columns of $\bar{\Sigma}_{i}$ being equal to zero, effectively reducing the dimensionality of the distribution to the number of parents of $X_{i}$ in $G$ (e.g., this has an impact on the normalization constant of this distribution). In the limit case where $X_{i}$ has no parent in $G$, we recover $P(\theta_{i} \mid G, \gD) = \delta_{0}$.

\end{document}